\def\minop{\mathop{\rm min}\limits}
\def\maxop{\mathop{\rm max}\limits}
\let\vec\relax
\newcommand{\vec}{\mathop{\rm vec}}
\def\R{\mathbb{R}}
\newcommand{\norm}[1]{\left\|#1\right\|}
\DeclareMathOperator{\E}{\mathbb{E}}
\let\l\relax
\DeclareMathOperator{\l}{\mathbf{\ell}}
\newcommand{\av}{\boldsymbol{a}}
\newcommand{\wv}{\boldsymbol{w}}
\newcommand{\xv}{\boldsymbol{x}}
\newcommand{\vW}{\boldsymbol{W}}
\newcommand{\thetav}{\boldsymbol{\theta}}
\newcommand{\epsv}{\boldsymbol{\varepsilon}}
\newcommand{\comment}[1]{}
\newcommand{\Ical}{\mathcal{I}}
\newcommand{\myparagraph}{\textbf}
\newtheorem{theorem}{Theorem}
\newtheorem{proposition}[theorem]{Proposition}
\title{Sharpness-Aware Minimization\\Leads to Low-Rank Features}
\author{
  Maksym Andriushchenko \\
  EPFL \\
  \texttt{maksym.andriushchenko@epfl.ch} \\
  \And
  Dara Bahri \\
  Google Research \\
  \texttt{dbahri@google.com} \\
  \And
  Hossein Mobahi \\
  Google Research \\
  \texttt{hmobahi@google.com} \\
  \And
  Nicolas Flammarion \\
  EPFL \\
  \texttt{nicolas.flammarion@epfl.ch} \\
}
\begin{document}

\maketitle

\begin{abstract}
    Sharpness-aware minimization (SAM) is a recently proposed method that minimizes the sharpness of the training loss of a neural network. While its generalization improvement is well-known and is the primary motivation, we uncover an additional intriguing effect of SAM: \textit{reduction of the feature rank} which happens at different layers of a neural network. We show that this low-rank effect occurs very broadly: for different architectures such as fully-connected networks, convolutional networks, vision transformers and for different objectives such as regression, classification, language-image contrastive training. To better understand this phenomenon, we provide a mechanistic understanding of how low-rank features arise in a simple two-layer network. We observe that a significant number of activations gets entirely \textit{pruned} by SAM which directly contributes to the rank reduction. We confirm this effect theoretically and check that it can also occur in deep networks, although the overall rank reduction mechanism can be more complex, especially for deep networks with pre-activation skip connections and self-attention layers. 
    We make our code available at \url{https://github.com/tml-epfl/sam-low-rank-features}.
\end{abstract}

\section{Introduction}

Understanding generalization and features learned by overparametrized deep networks is the key topic of modern machine learning. 
The training objective of deep networks typically has many global optima where the training points are perfectly fitted \citep{zhang2016understanding}, but very different features and generalization performance are obtained \citep{chizat2018lazy, liu2019bad}. 
It has been observed recently that the \textit{sharpness} of a minimum---i.e. how quickly the loss can change in some neighborhood around a minimum in the parameter space---correlates with the generalization error~\citep{keskar2016large, jiang2019fantastic}. The idea of minimizing the sharpness during training to improve generalization has motivated recent works that propose to use worst-case perturbations of the weights on every iteration of training \citep{foret2021sharpnessaware, zheng2020regularizing, wu2020adversarial}. 
We refer to this family of methods collectively as \textit{sharpness-aware minimization} (SAM) and focus on the version proposed in \citet{foret2021sharpnessaware} that uses a single step of normalized gradient ascent to approximate the worst-case weight perturbation.

\myparagraph{Contributions.}
In our paper, we are interested in shedding more light on the effect of SAM on the structure of the \textit{features} learned by the network. In particular, our main observation is that: 
\begin{center}
    \textit{Sharpness-aware minimization on overparametrized neural networks leads to low-rank features.}
\end{center}
While standard training techniques can already produce features of reduced rank \citep{huh2021low}, SAM significantly amplifies this effect. 
This effect is of interest when the goal is to identify a low-dimensional structure in the data, 
as well as for more efficient feature quantization and nearest neighbor retrieval based on the learned features. 

We make the following contributions in our paper:
\begin{itemize}[topsep=0pt]
    \item In Section~\ref{sec:main_experiments}, we present extensive empirical evidence of low-rank features for various models (ResNets, ViTs, MLP-Mixers) trained with SAM on four classification tasks (CIFAR-10/100, Tiny ImageNet, ImageNet-1k) as well as for contrastive text-image training (MS-COCO). 
    \item In Section~\ref{sec:understanding_simple_models}, we provide a mechanistic understanding of how low-rank features arise in a simple two-layer ReLU network. We observe that a significant number of activations gets entirely \textit{pruned} by SAM, which directly contributes to the rank reduction. Furthermore, we provide a theoretical argument supporting this effect of SAM.
    \item In Section~\ref{sec:understanding_deep_nets}, we confirm that this effect can also occur in deep networks, although the overall rank reduction mechanism can be more complex, especially for deep networks with pre-activation skip connections and self-attention layers. 
    \item Finally, in Section~\ref{sec:rank_vs_generalization}, we show that directly inducing low-rank features %
    does not improve generalization on natural data. Thus, we conclude that SAM is unique in its ability to both improve generalization \textit{and} decrease the feature rank in a data-adaptive fashion.
\end{itemize}

\section{Related work and background knowledge on SAM} \label{sec:background}
In this section, we first present the most relevant previous works and then cover background on SAM.

\subsection{Related work}
\myparagraph{Sharpness-aware minimization.}
The idea behind SAM introduced by \citet{foret2021sharpnessaware} is to minimize the sharpness during training to improve generalization. 
SAM modifies stochastic gradient descent (SGD) such that on every iteration of training, the gradient is taken not at the current iterate but rather at an approximate worst-case point in its vicinity. 
\citet{zheng2020regularizing} concurrently propose a similar weight perturbation method which also successfully improves standard generalization on multiple deep learning benchmarks. 
\citet{wu2020adversarial} also propose an almost identical algorithm with the same motivation, but with the focus on improving robust generalization of adversarial training. 
\citet{chen2022when} discover that SAM is particularly helpful to improve generalization for new architectures like vision transformers \citep{dosovitskiy2021an} and MLP-Mixers \citep{tolstikhin2021mlpmixer}. \citet{andriushchenko2022towards} study the reasons behind the success of SAM and characterize its effect on simple diagonal linear networks. 
\citet{bartlett2022dynamics} and \citet{wen2023how} theoretically analyze the regularization effect of SAM close to a minimum.

\myparagraph{Implicit sharpness minimization.}
There are multiple works that suggest that sharpness can also be minimized \textit{implicitly} by standard training algorithms. 
For example, \citet{cohen2021gradient} formulate the dependency between the learning rate used for training and the sharpness given by the maximum eigenvalue of the Hessian. %
\citet{barrett2021implicit} and \citet{smith2021origin} derive a gradient regularization term that is related to SAM by analyzing SGD and GD with finite step sizes. 
\citet{damian2021label} suggest a connection between label-noise SGD which implicitly minimizes the trace of the Hessian and SAM whose variations can also minimize the same quantity \citep{wen2023how}. 
\citet{mulayoff2021the} and \citet{nacson2023the} suggest a relation between sharpness and smoothness for two-layer networks depending on the learning rate of gradient descent.

\myparagraph{Low-rank and sparse features.} 
The most related work to ours is the one by \citet{ramesh2022hierarchical}. They briefly note that contrastive language-image pretraining (CLIP) \citep{radford2021learning} with SAM leads to a reduction of the feature rank which they leverage for more efficient feature quantization. 
\citet{huh2021low} observe that existing training techniques such as SGD on neural networks can already produce features of reduced rank. 
On a similar note, \citet{ethayarajh2019contextual} and \citet{cai2021isotropy} observe an extreme anisotropy of the feature space for language models trained with standard methods, including an interesting low-rank cluster structure.
\citet{galanti2022sgd} and \citet{timor2023implicit} discuss a low-rank effect in the weight matrices due to a small weight norm which is achieved either with weight decay or with a small initialization scale.
\citet{na2022train} suggest that models trained with SAM are more compressible via weight pruning and quantization. 
\citet{gulcehre2022empirical} observe that using SAM with dropout can lead to a lower rank in reinforcement learning tasks. 
\citet{andriushchenko2022sgd} study the effect of large step size SGD on the Jacobian of the network and activation sparsity. %
\citet{li2023the} observe that \textit{standard} training leads to extreme activation sparsity in the MLP blocks of transformers.

\subsection{Background on sharpness-aware minimization}
Let $\Ical_{\text{train}} = \{1, \dots, n\}$ be the indices of the training set $\{\xv_i, y_i\}_{i=1}^n$ and $\l_i(\thetav)$ be the loss of a model parametrized by weights $\thetav \in \R^{|\thetav|}$ and evaluated at point $(\xv_i, y_i)$. 
\citet{foret2021sharpnessaware} propose to minimize the following worst-case objective instead of standard average loss minimization: %
\begin{align} \label{eq:m_sam}
    \text{\textbf{SAM objective:}} \ \ \minop_{\thetav \in \R^{|\thetav|}} %
    \mathop{\E}_{\Ical \subset \Ical_{\text{train}}}
    \maxop_{\norm{\epsv}_2 \leq \rho} 
    \frac{1}{|\Ical|} \sum_{i\in \Ical} \l_i(\thetav + \epsv),
\end{align} 
where $\Ical$ is a random subset of $m$ training points. 
We note this objective is based on the maximization of the sum of losses over batches of $m$ points each.
To make SAM practical, \citet{foret2021sharpnessaware} propose to minimize the objective \eqref{eq:m_sam} with stochastic gradients. 
Denoting the batch indices at time $t$ by $\Ical_t$, this leads to the following update rule on each iteration of training:
\begin{equation} \label{eq:sambatch}
    \text{\textbf{SAM update:}} \ \ \ \thetav_{t+1} := \thetav_t -\frac{\gamma_t}{|\Ical_t|}\sum_{i\in \Ical_t} \nabla \ell_i \Big(  \thetav_t +\frac{\rho_t}{|\Ical_t|} \sum_{j\in \Ical_t} \nabla \ell_j(\thetav_t) \Big).
\end{equation}
We note that the \textit{same} batch $\Ical_t$ is used for the inner and outer gradient steps, and $\rho_t$ typically includes the gradient normalization, i.e., $\rho_t := \rho / \| \frac{1}{|\Ical_t|} \sum_{j\in \Ical_t} \nabla \ell_j(\thetav_t)\|_2$. The worst-case perturbations and the use of a small $m$ in SAM are essential for the generalization improvement which depends continuously on $m$ as noted in \citet{foret2021sharpnessaware}.

\section{Experimental evidence of low-rank features due to SAM} \label{sec:main_experiments}
We first discuss how we measure the feature rank and then present strong empirical evidence for low-rank features for models trained with SAM. We consider first classification tasks on CIFAR-10, CIFAR-100 \citep{krizhevsky2009learning}, Tiny ImageNet \citep{le2015tiny}, and ImageNet-1k \citep{deng2009imagenet}, and then contrastive learning on MS-COCO \citep{lin2014microsoft}.

\myparagraph{How we measure the feature rank.} 
Consider a neural network $f: \mathcal{X} \rightarrow \R^D$ which maps inputs from a set $\mathcal{X}$ (e.g., images) to a $D$-dimensional vector (e.g., logits for classification tasks or embeddings for contrastive learning). We assume that $f$ can be decomposed on $B$ \textit{blocks}, i.e., $f(\xv) = f_B \circ f_{B-1} \circ ... \circ f_1 (\xv)$ where a single block $f_b : \R^{d_{b-1}} \rightarrow \R^{d_b}$ can consist of multiple layers. %
By \textit{features} we refer to the intermediate values $f_b(\xv) \circ ... \circ f_1 (\xv)\in \R^{d_b}$ at some block $b$.
To assess their rank, we first do a PCA on the feature matrix $[f_b(\xv_i)^\top]_{i=1}^{d_b} \in \R^{n \times d_b}$ and then select \textit{the minimal number of principal components that span $99\%$ of the variance in the data}. 
We refer to this measure as the feature rank, as it captures the notion of the most informative features in an interpretable manner. We note that this is not equivalent to the matrix rank in a strict mathematical sense. However, it provides a practical alternative to selecting a fixed threshold on singular values, which can often be challenging to determine.

\subsection{Low-rank features for ResNets on standard classification tasks}

\begin{figure}[t!]
	\centering
    \footnotesize
    \textbf{Minimal setting: small learning rate, no momentum, no weight decay}\\
    \includegraphics[width=0.32\linewidth]{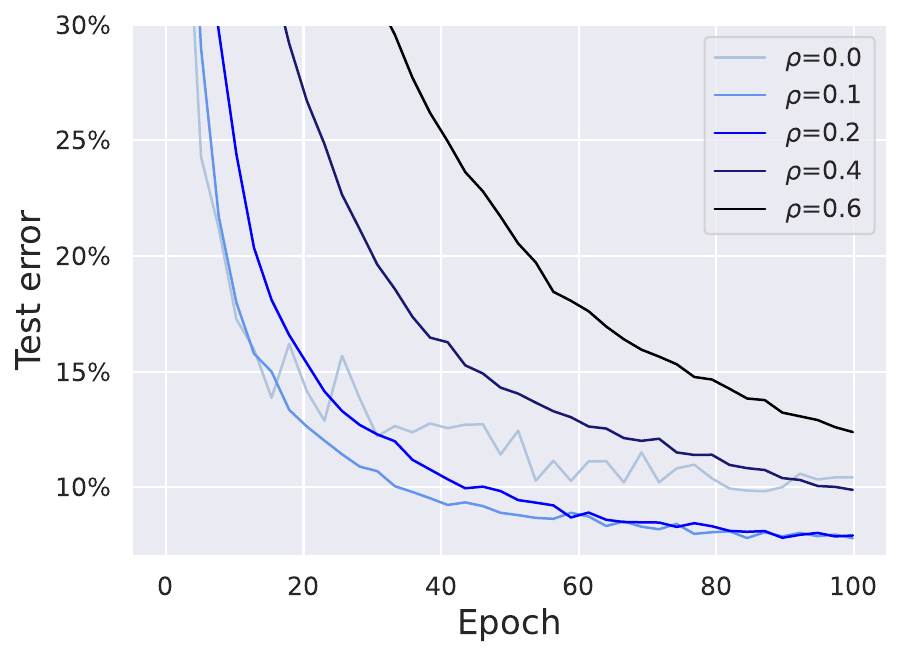}
    \includegraphics[width=0.32\linewidth]{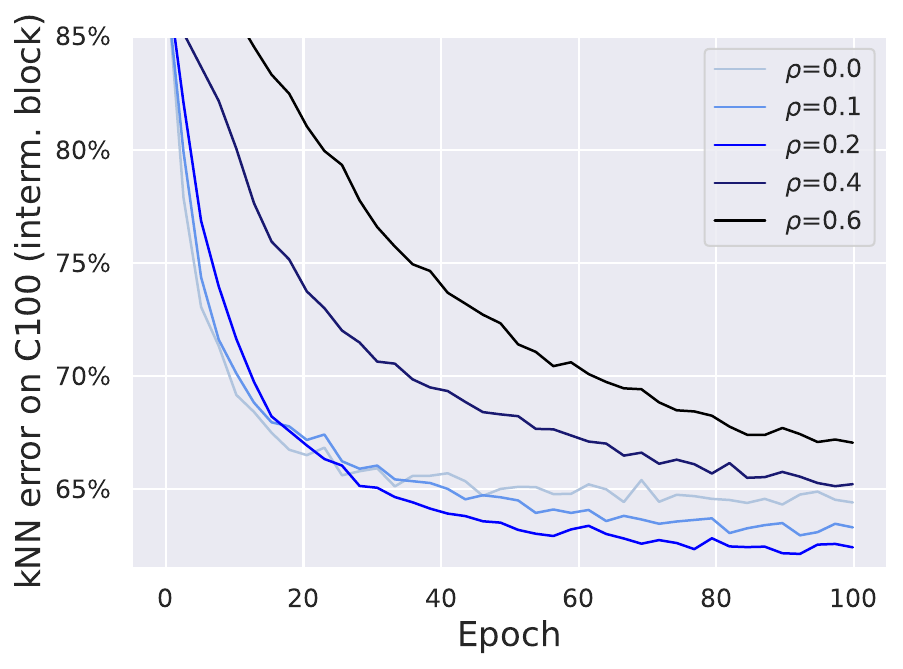}
    \includegraphics[width=0.32\linewidth]{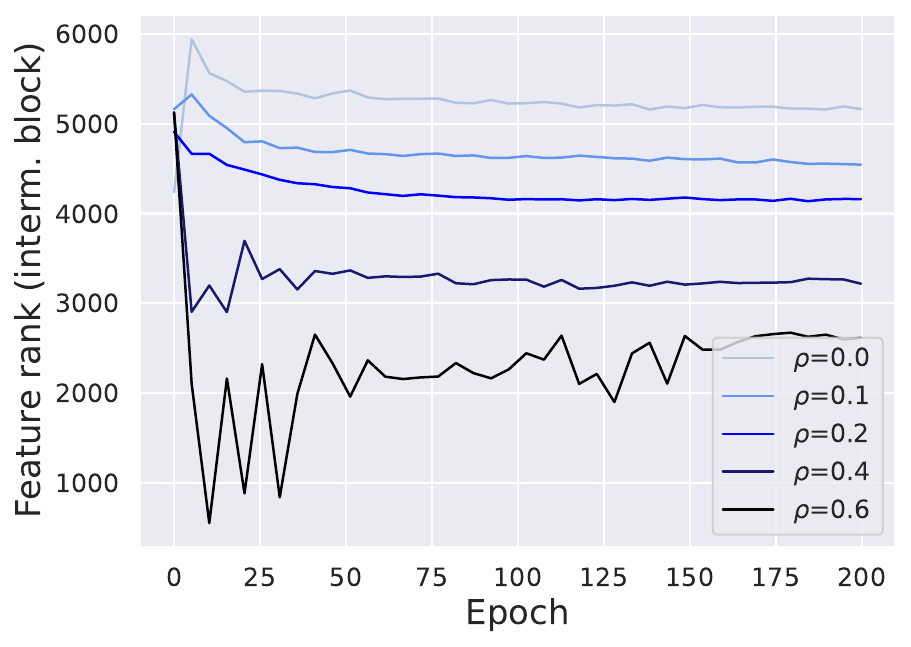}
    
    \textbf{State-of-the-art setting: large learning rate, momentum, weight decay}\\
    \includegraphics[width=0.32\linewidth]{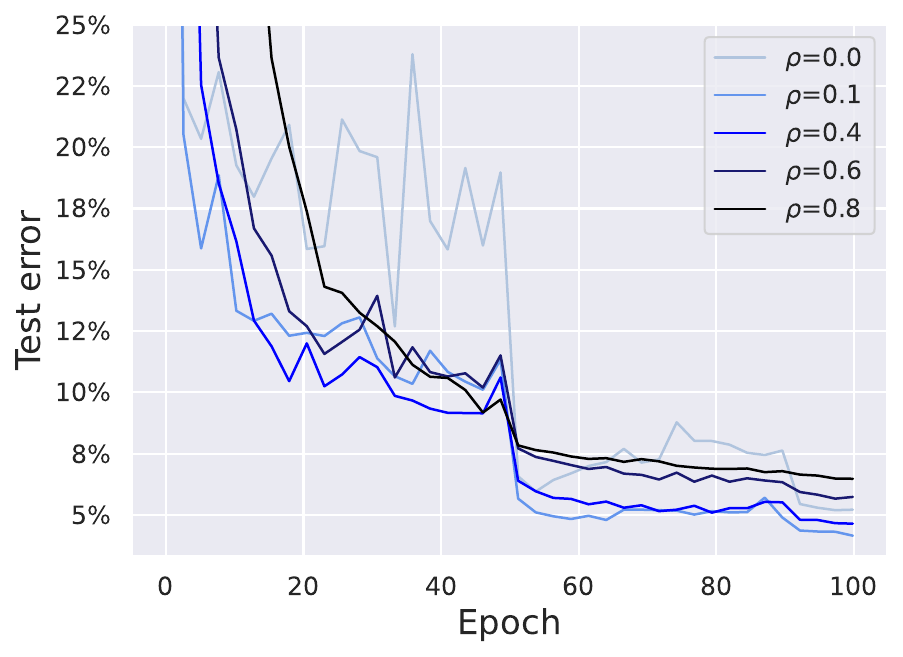}
    \includegraphics[width=0.32\linewidth]{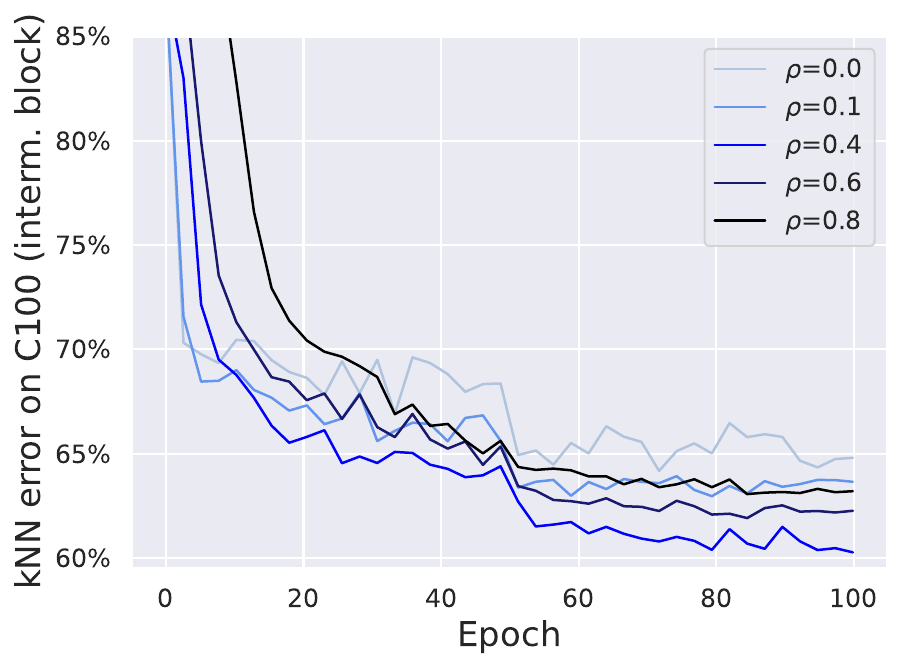}
    \includegraphics[width=0.32\linewidth]{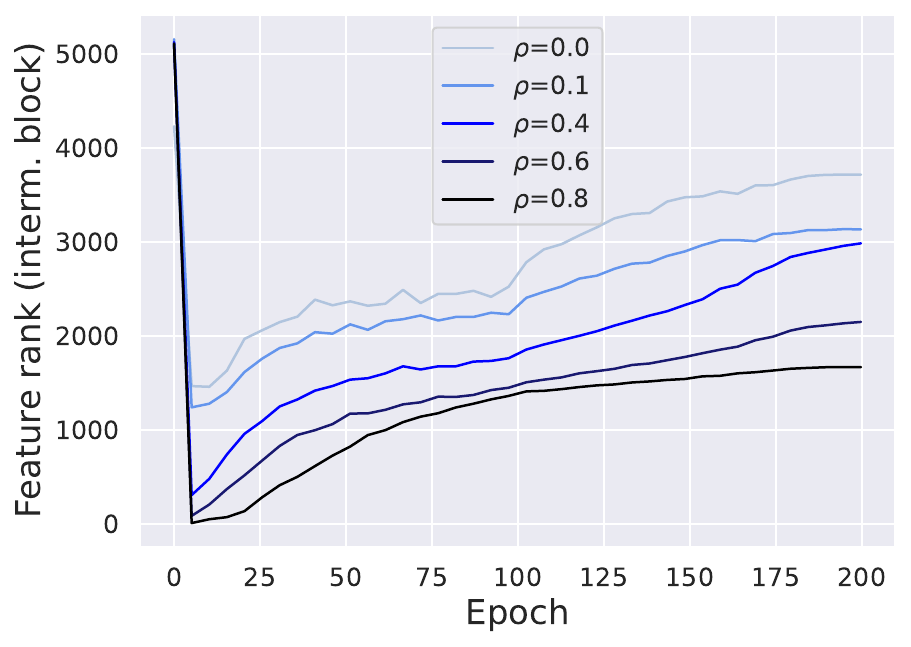}

	\caption{\textbf{ResNet-18 on CIFAR-10.} SAM improves test error (\textit{left}), leads to more generalizable features (\textit{middle}), and noticeably reduces the feature rank at the intermediate ResNet block (\textit{right}). Note that the test error improvement is U-shaped in $\rho$ unlike the rank reduction which is monotonic.}
	\label{fig:c10_low_rank_over_iterations}
\end{figure}
\begin{figure}[t!]
	\centering
    \footnotesize
    \textbf{Minimal setting: small learning rate, no momentum, no weight decay}\\
    \includegraphics[width=0.32\linewidth]{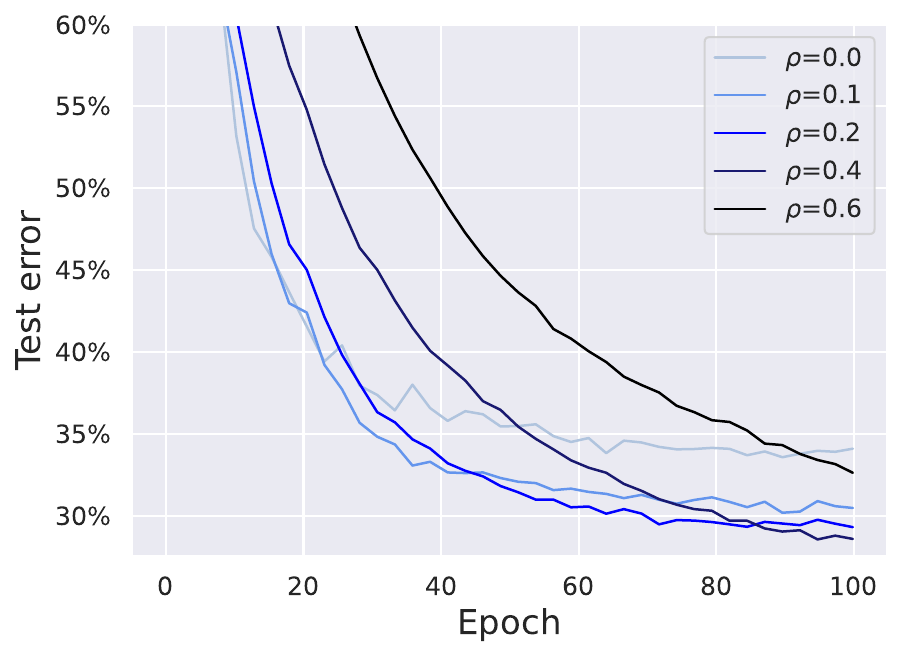}
    \includegraphics[width=0.32\linewidth]{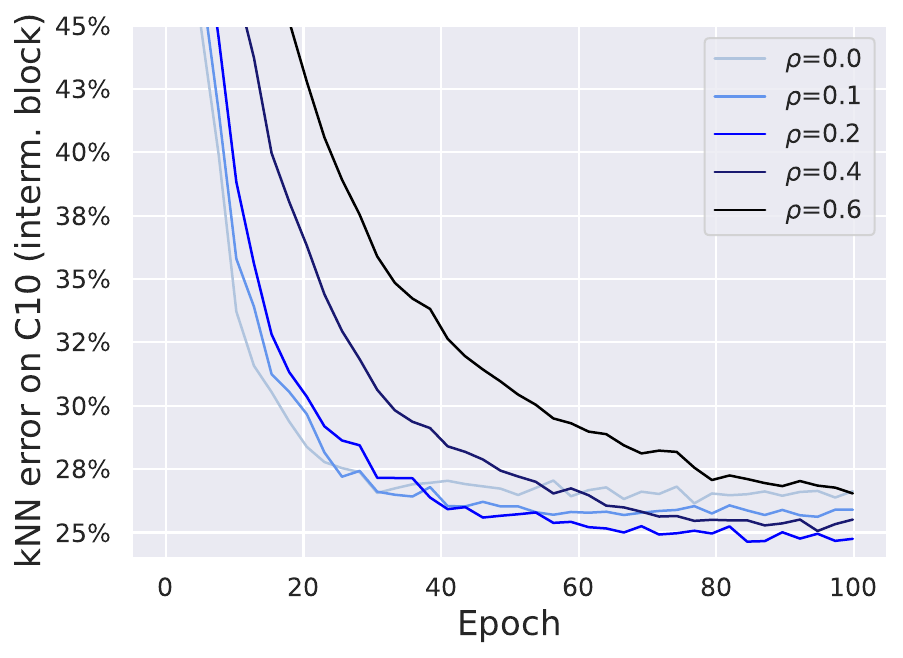}
    \includegraphics[width=0.32\linewidth]{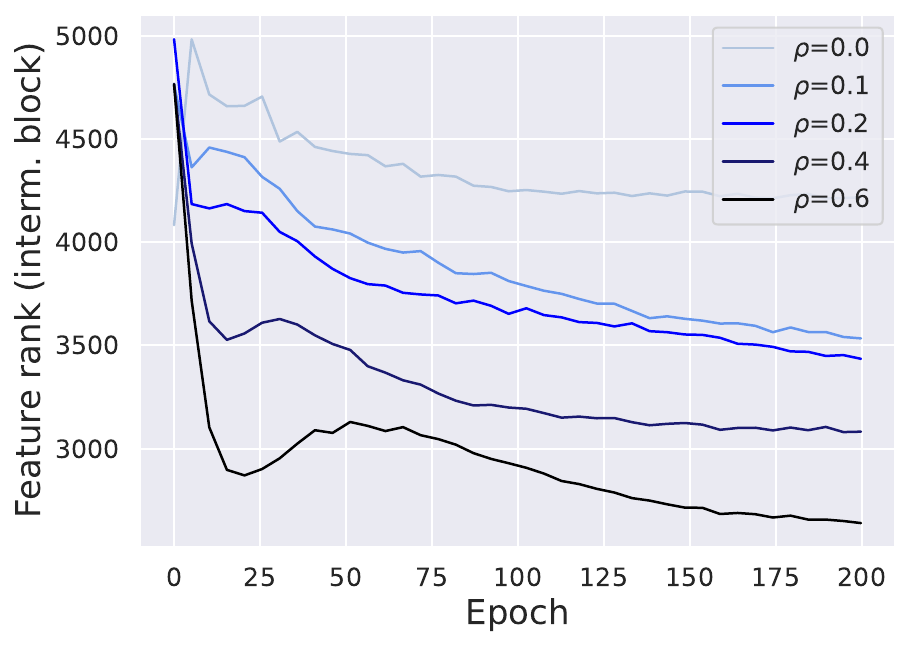}
    
    \textbf{State-of-the-art setting: large learning rate, momentum, weight decay}\\
    \includegraphics[width=0.32\linewidth]{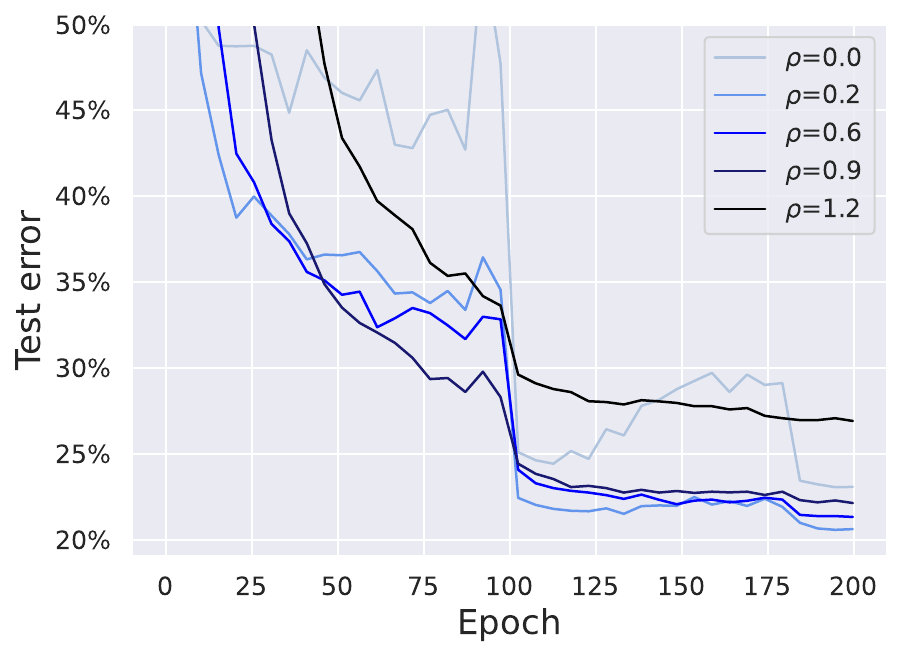}
    \includegraphics[width=0.32\linewidth]{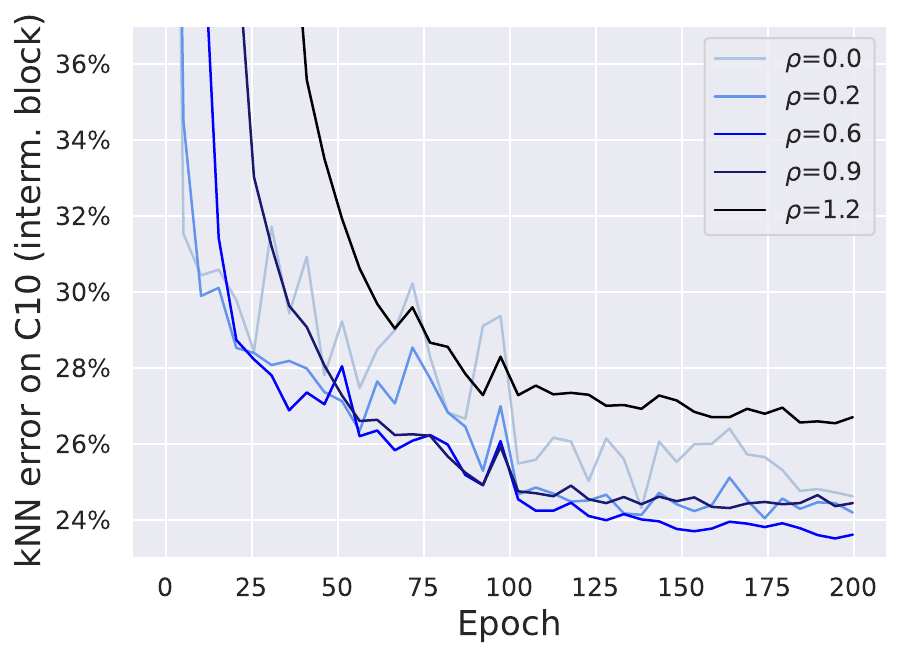}
    \includegraphics[width=0.32\linewidth]{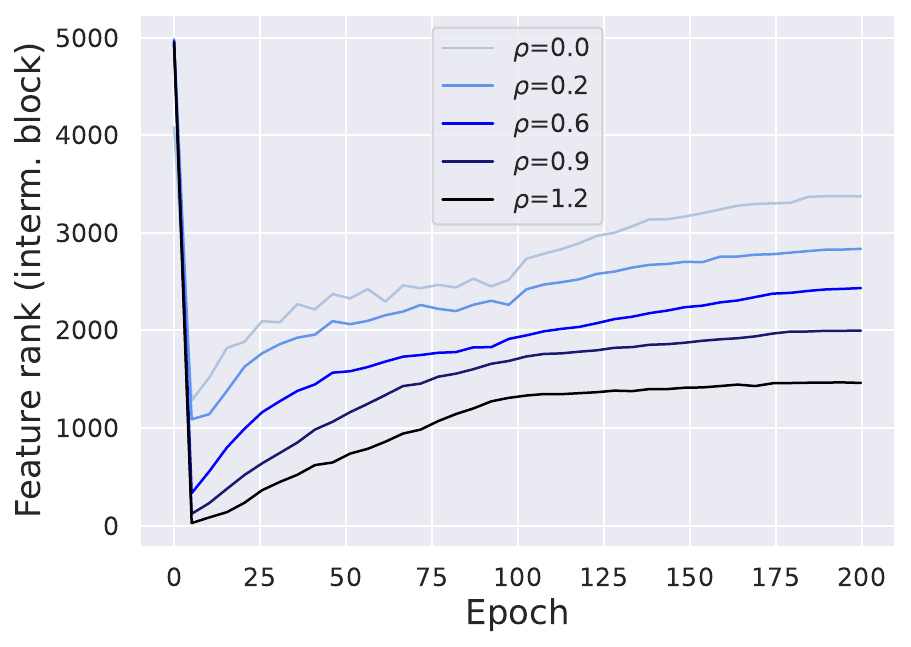}

    \caption{\textbf{ResNet-18 on CIFAR-100.} SAM improves test error (\textit{left}), leads to more generalizable features (\textit{middle}), and noticeably reduces the feature rank at the intermediate ResNet block (\textit{right}). Note that the test error improvement is U-shaped in $\rho$ unlike the rank reduction which is monotonic.}
	\label{fig:c100_low_rank_over_iterations}
\end{figure}
\begin{figure}[t!]
	\centering
    \footnotesize
    \textbf{Minimal setting: small learning rate, no momentum, no weight decay}\\
    \includegraphics[width=0.32\linewidth]{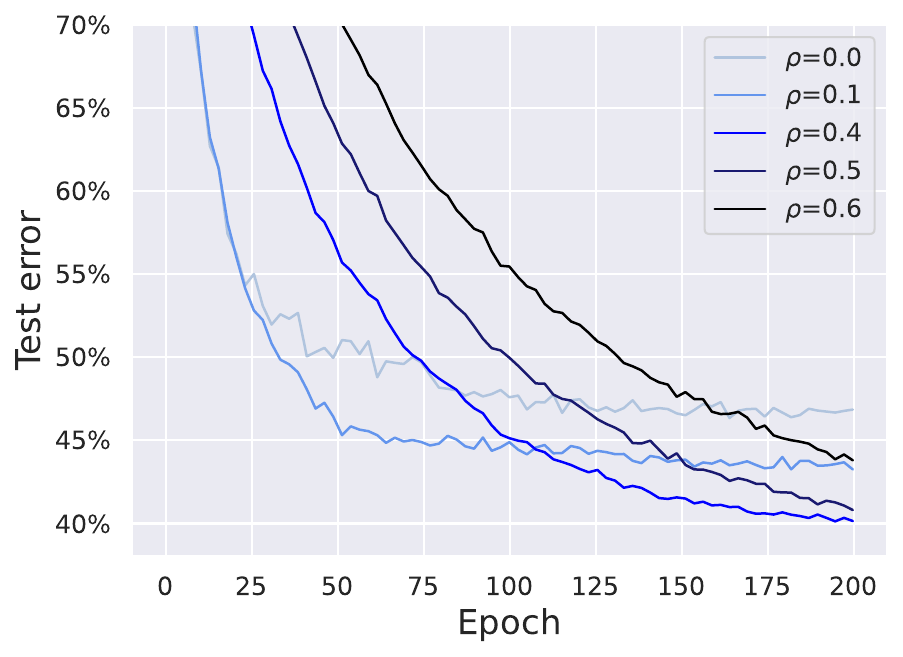}
    \includegraphics[width=0.32\linewidth]{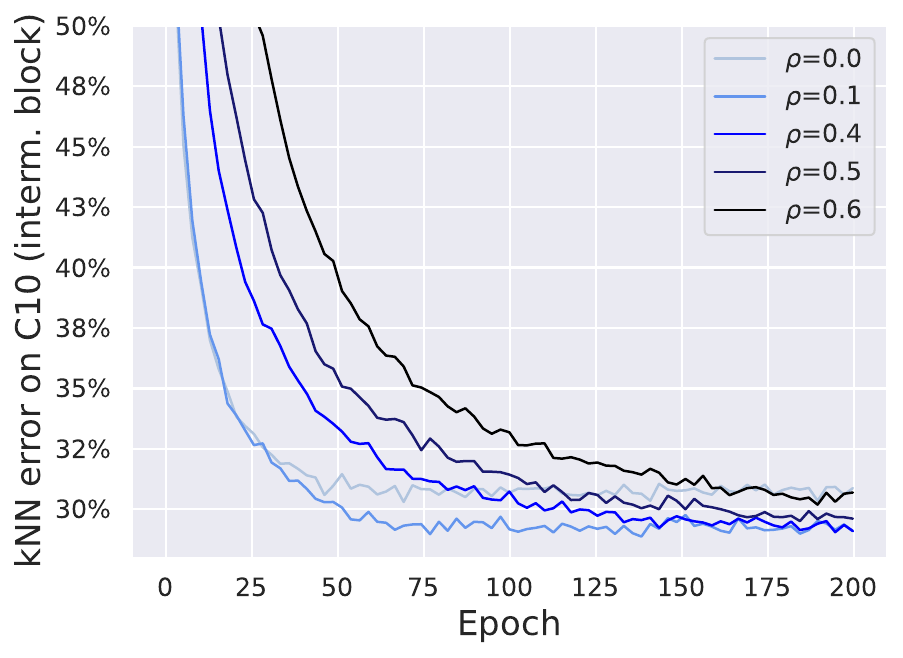}
    \includegraphics[width=0.32\linewidth]{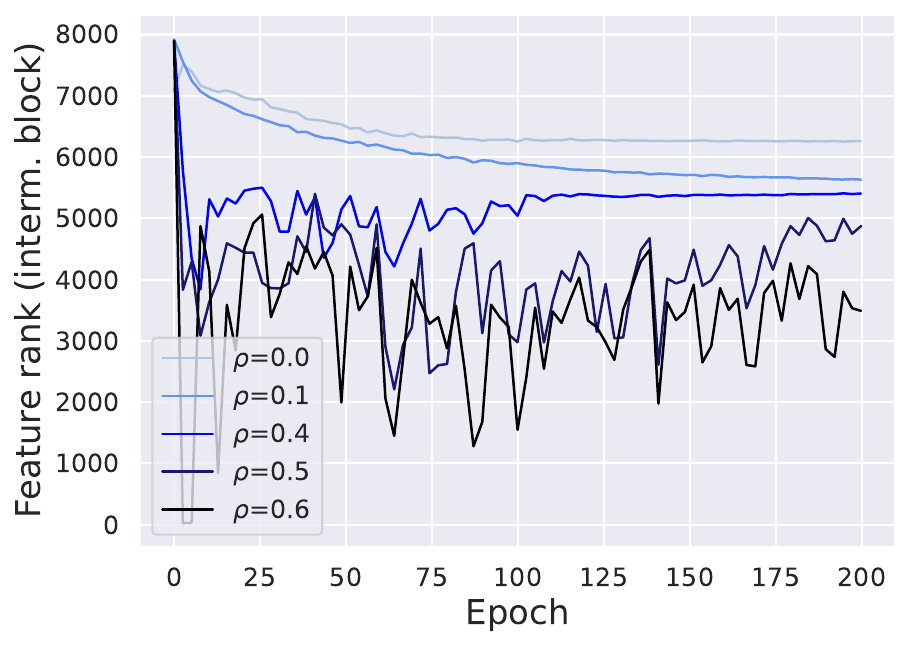}
    
    \textbf{State-of-the-art setting: large learning rate, momentum, weight decay}\\
    \includegraphics[width=0.32\linewidth]{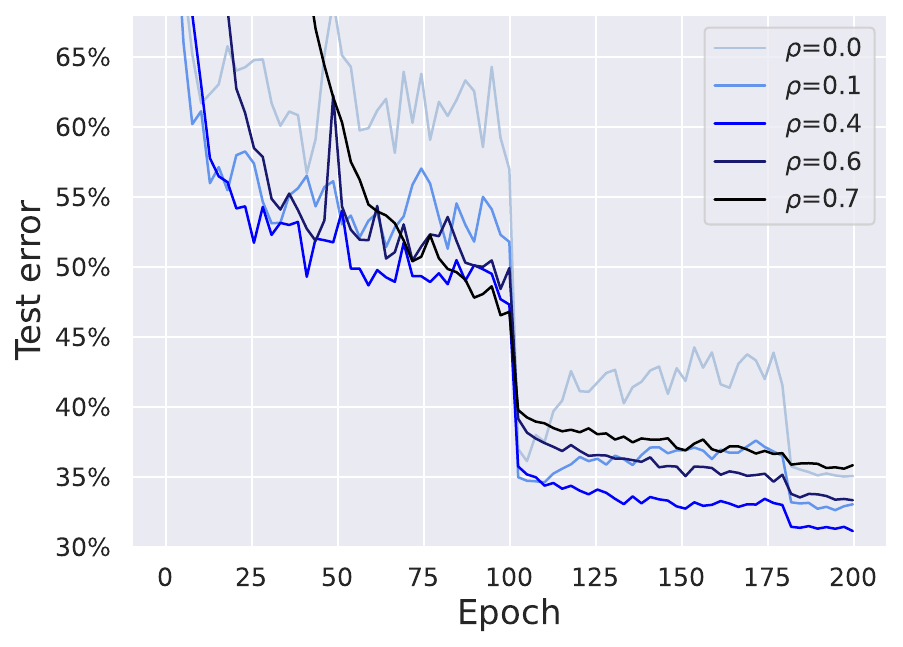}
    \includegraphics[width=0.32\linewidth]{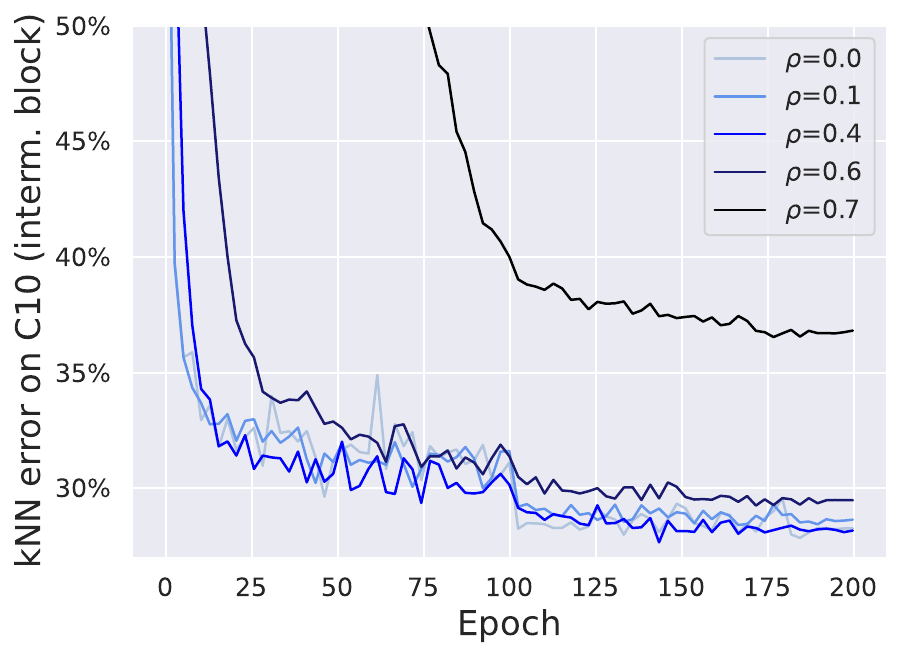}
    \includegraphics[width=0.32\linewidth]{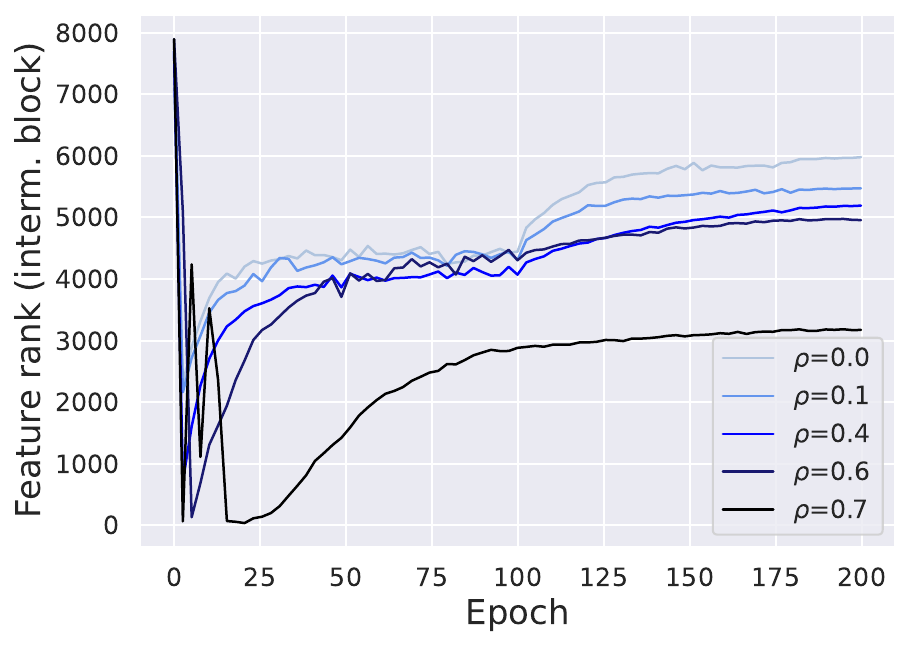}

    \caption{\textbf{ResNet-18 on Tiny ImageNet.} SAM improves test error (\textit{left}), leads to more generalizable features (\textit{middle}), and noticeably reduces the feature rank at the intermediate ResNet block (\textit{right}). Note that the test error improvement is U-shaped in $\rho$ unlike the rank reduction which is monotonic.}
	\label{fig:tin_low_rank_over_iterations}
\end{figure}

\myparagraph{Setup.}
We train a PreAct ResNet-18 \citep{he2016identity} with standard augmentations %
on standard deep learning datasets: CIFAR-10, CIFAR-100, and Tiny ImageNet. 
We consider both 
(1) a \textit{minimal setting} with a small learning rate, no momentum, and no weight decay, and 
(2) a \textit{state-of-the-art setting} which includes large learning rates, momentum, and weight decay. 
Since we observe that \textit{neural collapse}---convergence of the feature rank of the penultimate layer to the number of classes \citep{papyan2020prevalence}---occurs in a cascading fashion \citep{hui2022limitations} and interferes with the low-rank trend of SAM, we exclude the last residual superblock and instead report the rank at the \textit{third} superblock.\footnote{Further details can be found in our code \url{https://github.com/tml-epfl/sam-low-rank-features}.}

\myparagraph{Observations.}
We plot the main metrics for these three datasets in Figure~\ref{fig:c10_low_rank_over_iterations}, \ref{fig:c100_low_rank_over_iterations}, and \ref{fig:tin_low_rank_over_iterations}. 
We observe that the models trained with SAM achieve a \textit{substantially} smaller feature rank which occurs not only at the end but also \textit{throughout} the training.
We also note that the generalization improvement is \textit{U-shaped} in $\rho$ (i.e., too large $\rho$ is harmful) unlike the rank reduction which is monotonic. 
We note that for the state-of-the-art setting, the rank exhibits a sudden drop at the beginning and then a gradual increase, both for standard training and SAM. We verified that such behavior originates from the usage of initial large learning rates and is not specific to SAM.   
Overall, the rank reduction effect is very consistent over the three datasets and over both minimal and state-of-the-art settings.
We also observe that augmentations play a crucial role in revealing the low-rank trend with respect to the $\rho$ of SAM, whereas the addition of only weight decay is insufficient for revealing a similar trend. 
Additionally, to confirm the generalizability of the low-rank features taken \textit{at an intermediate layer}, we also include transfer learning performance using k-nearest neighbors classification with $k=10$. For this purpose, we compute (1) the k-NN classification error on CIFAR-10 for models trained on CIFAR-100 and (2) the k-NN classification error on CIFAR-100 for the models trained on CIFAR-10 and Tiny ImageNet. 
Without this experiment, one could assume that since these features are not from the penultimate layer, they can be of limited use for downstream tasks. However, this experiment highlights that it is not the case: the block-3 features are still suitable for transfer learning and show completely non-trivial performance.

\myparagraph{Additional experiments.} 
We report a few more related experiments in Appendix~\ref{app:additional_exps_clsf}. 
First, we note that the feature rank shows the same trend on both training and test sets and is not sensitive to the chosen variance threshold (see Figure~\ref{fig:train_vs_test_ranks}~and~\ref{fig:diff_variance_threshold} in Appendix). Moreover, the same picture holds also for different batch sizes (see Figure~\ref{fig:c10_low_rank_diff_batch_sizes} in Appendix).
Additionally, we also report results for more recent variants of SAM such as ASAM from \citet{kwon2021asam} and GAM from \citet{zhang2023gradient} (see Table~\ref{tab:asam_gam} in Appendix). 
Finally, we discuss the behavior of the feature rank at the \textit{penultimate} layer instead of the intermediate ResNet block (see Figure~\ref{fig:feature_rank_last_layer} in Appendix).

\subsection{Low-rank features for ViTs and MLP-Mixers on ImageNet-1k}
\myparagraph{Setup.}
We take publicly available models trained on ImageNet-1k from the \texttt{vision\_transformer} library\footnote{\url{https://github.com/google-research/vision_transformer}} which contains models from \citet{dosovitskiy2021an}, \citet{tolstikhin2021mlpmixer}, and \citet{chen2022when}. We evaluate the feature rank on $12\,800$ examples using the zeroth token for ViTs and average features over all tokens for MLP-Mixers. %

\myparagraph{Observations.}
We report the results in Table~\ref{tab:public_imagenet_models}. 
We note that neural collapse is not an issue here since the feature dimension is smaller than the number of classes on ImageNet-1k. 
We see a very consistent rank reduction trend from SAM for each setting, with the only exception of MLP-Mixers after $\nicefrac{3}{4}$ of the total number of blocks. 
Finally, we note that these models are trained with rather small $\rho$ of SAM (e.g., $0.2$ for ViT-B/16), and we can expect a stronger low-rank effect for higher $\rho$. %

\begin{table}[ht]
    \centering
    \caption{\textbf{ViTs and MLP-Mixers on ImageNet-1k.} We compute the feature ranks for publicly available models from the \texttt{vision\_transformer} library.}
    \vspace{2mm}
    \begin{tabular}{ll rrrrr}
         \textbf{Training} & \textbf{Block} & \textbf{ViT-B/16} & \textbf{ViT-B/32} & \textbf{ViT-L/16} & \textbf{ViT-L/16} & \textbf{Mixer-B/16}\\
         \toprule
         Standard & Last & 680 & 672 & 903 & 898 & 695\\
         SAM      & Last & \textbf{617} & \textbf{654} & \textbf{820} & \textbf{844} & \textbf{620}\\
         \hline
         Standard & After $\nicefrac{3}{4}$ blocks & 467 & 484 & 595 & 595 & \textbf{301}\\
         SAM      & After $\nicefrac{3}{4}$ blocks & \textbf{426} & \textbf{440} & \textbf{314} & \textbf{442} & 477\\
         \hline
         Standard & After $\nicefrac{1}{2}$ blocks & 412 & 390 & 387 & 469 & 425\\
         SAM      & After $\nicefrac{1}{2}$ blocks & \textbf{346} & \textbf{362} & \textbf{250} & \textbf{318} & \textbf{369}\\
    \end{tabular}
    \label{tab:public_imagenet_models}
\end{table}

\subsection{Low-rank features in contrastive language-image training on MS-COCO}

\myparagraph{Setup.}
Here we use a training setting similar to CLIP \citep{radford2021learning} but we start from pretrained image and language models. 
We fine-tune the R+Ti/16 vision transformer from \citet{steiner2021train} and BERT from \citet{devlin2018bert} on MS-COCO using the InfoNCE contrastive loss \citep{oord2018representation}. 
We add a linear head to each of the encoders to match the dimension of both pre-trained encoders.
We note that for contrastive training, unlike for classification, neural collapse is not an issue, so we report the feature rank directly at the last layer. 
We measure the retrieval error within each batch of size $128$.

\myparagraph{Observations.}
In Figure~\ref{fig:mscoco_main_results}, we observe that SAM both substantially improves the retrieval error and leads to features of much smaller rank. 
We note that having features of reduced rank may contradict the common intuition that low rank is typically harmful in contrastive learning \citep{chen2021exploring, hua2021feature}, however, we are still in the regime that the remaining dimensions are expressive enough to preserve (and even improve) the retrieval performance.  
In addition, we note that the low-rank features are observed both in the image \textit{and} text encoders suggesting that this effect of SAM is not specific to image data. 
Moreover, we observe the low-rank effect also when the text encoder is frozen during fine-tuning (Figure~\ref{fig:mscoco_frozen_text_encoder} in Appendix). Thus, even a single linear layer on the side of text features can be sufficient to get the low-rank effect.

\begin{figure}[t]
    \centering
    \footnotesize

    \includegraphics[width=0.49\linewidth]{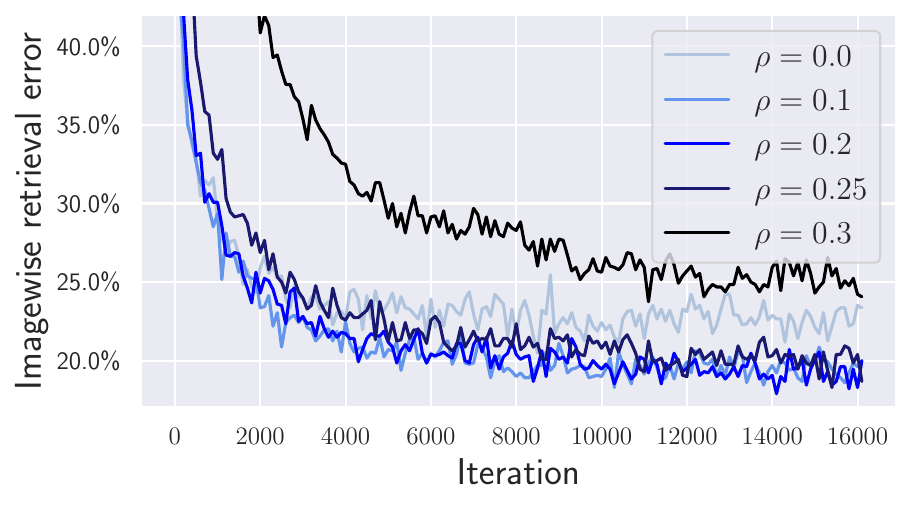}
    \includegraphics[width=0.49\linewidth]{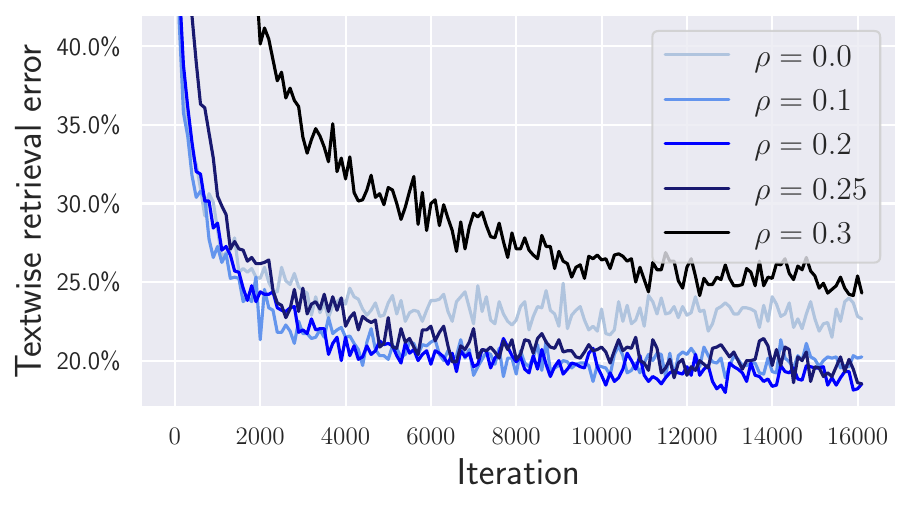}\\
    \includegraphics[width=0.49\linewidth]{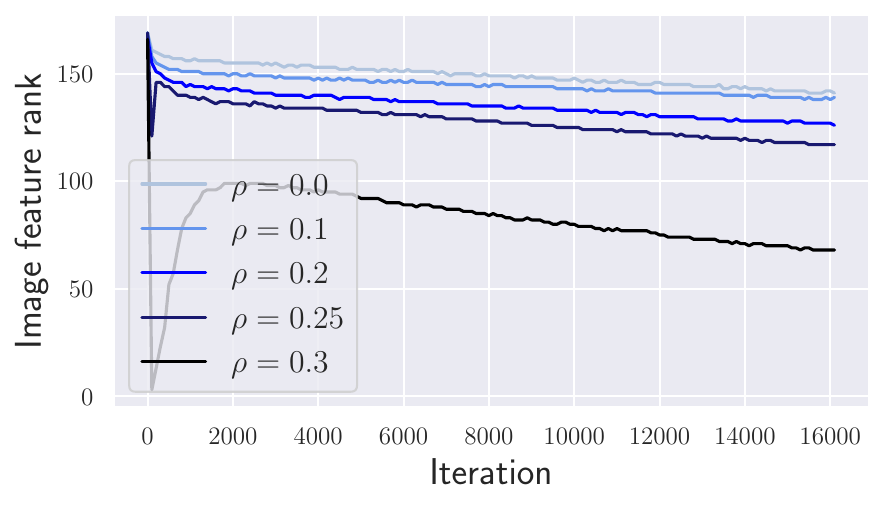}
    \includegraphics[width=0.49\linewidth]{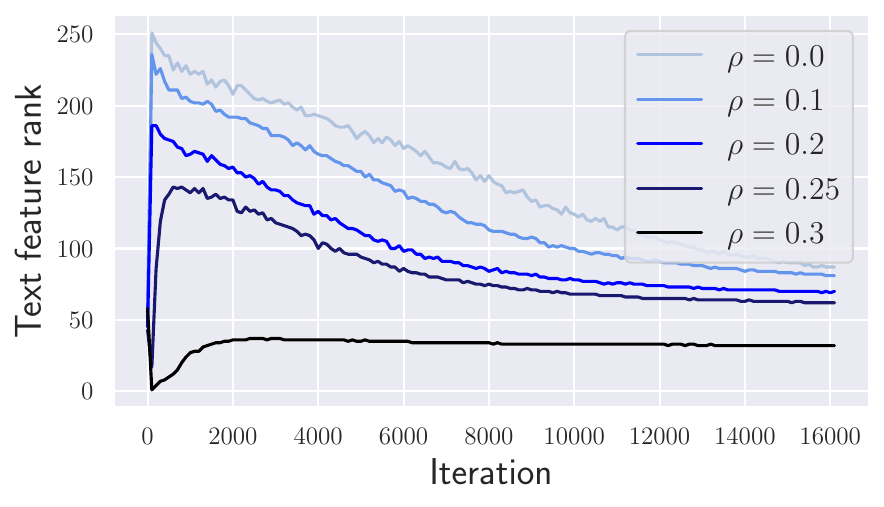}\\

    \caption{\textbf{Contrastive image-text training on MS-COCO.} SAM improves the retrieval error (\textit{top}) and reduces the feature rank at the last layer of both image and text encoders (\textit{bottom}).}
    \label{fig:mscoco_main_results}
\end{figure}

\section{How SAM can induce low-rank features: insights from simple models} \label{sec:understanding_simple_models}
In this section, we dissect the source of the low-rank effect of SAM on two-layer neural networks.
We first provide clear empirical evidence that this effect can be attributed to zero activations  and then confirm it theoretically.

\subsection{Mechanistic understanding of low-rank features for a two-layer ReLU network}

\myparagraph{Setup.}
We consider a ReLU network $f(\thetav) = \langle \av, \sigma(\vW\xv) \rangle$ trained with the squared loss and parametrized by $\thetav = [\vec(\vW), \av]$ where $\vW \in \R^{m \times d}$ and $\av \in \R^m$.
We use the teacher-student setup with $3$ teacher neurons, $m=100$ student neurons, and input dimension $d=3$ inspired by the setup of \citet{chizat2018lazy}. 
The goal of the student network is to recover the teacher network from a finite set of training points which are sampled from the Gaussian distribution and labeled by the teacher network. 
We use SAM for the first $50\%$ of iterations and then disable it to achieve full convergence as we observed that running SAM, especially with high $\rho$, hinders convergence. 
 To obtain a smooth trend of rank reduction and clear separation between different $\rho$, we exceptionally use the PCA variance threshold of $99.99\%$, which is higher than the $99\%$ used in all other experiments.

\myparagraph{Low-rank features are due to zero activations.}
We provide a mechanistic understanding of how low-rank features emerge in Figure~\ref{fig:fc_net_sam}. First, we observe that even in this simple setting, SAM improves generalization and learns features of substantially smaller rank ($4$ instead of $19$). 
We investigate the origin of low-rank features by examining the number of active ReLU units, which noticeably decreases with increasing $\rho$. Although sparse activations do not necessarily imply low-rank features (e.g., the identity matrix is sparse but full-rank), we observe that SAM sets entire activations to zero during training.
For the largest $\rho=0.6$, only $10$ out of $100$ neurons are activated on at least one training example, and even fewer of them contribute to the $99.99\%$ PCA variance in the data. Similar results are shown for other activations, such as tanh and absolute value (see Figure~\ref{fig:fc_net_sam_other_act} in Appendix), where the low-rank effect can be similarly attributed to zero activations.

\myparagraph{SAM $\neq$ weight decay.}
Interestingly, SAM reduces the number of active ReLUs not by shrinking to zero the weight vectors associated with ReLU units, but by leveraging the negative part of ReLU to effectively prune neurons.
As suggested in Figure~\ref{fig:fc_net_sam} (right), the weight norms only increase with $\rho$, and the effect of SAM is \textit{opposite} to that of weight decay. 
Thus, theoretical results demonstrating that weight decay leads to low-rank features~\citep{galanti2022sgd, timor2023implicit} are not directly applicable to studying the low-rank effect of SAM. This observation highlights the difference between SAM and classical regularization methods and explains why we cannot achieve the same regularization effect with weight decay.  To better understand this phenomenon for this simple model, we need to find a theoretical argument  \textit{specific to SAM}.

\begin{figure}[t]
	\centering
    \footnotesize
    \setlength\tabcolsep{2pt}
    \begin{tabular}{cc}
        \hspace{-5mm}
        \textbf{Generalization improvement and low feature rank} & 
        \textbf{Understanding the low-rank effect} \\ 
        \hspace{-5mm}
        \includegraphics[width=0.49\linewidth]{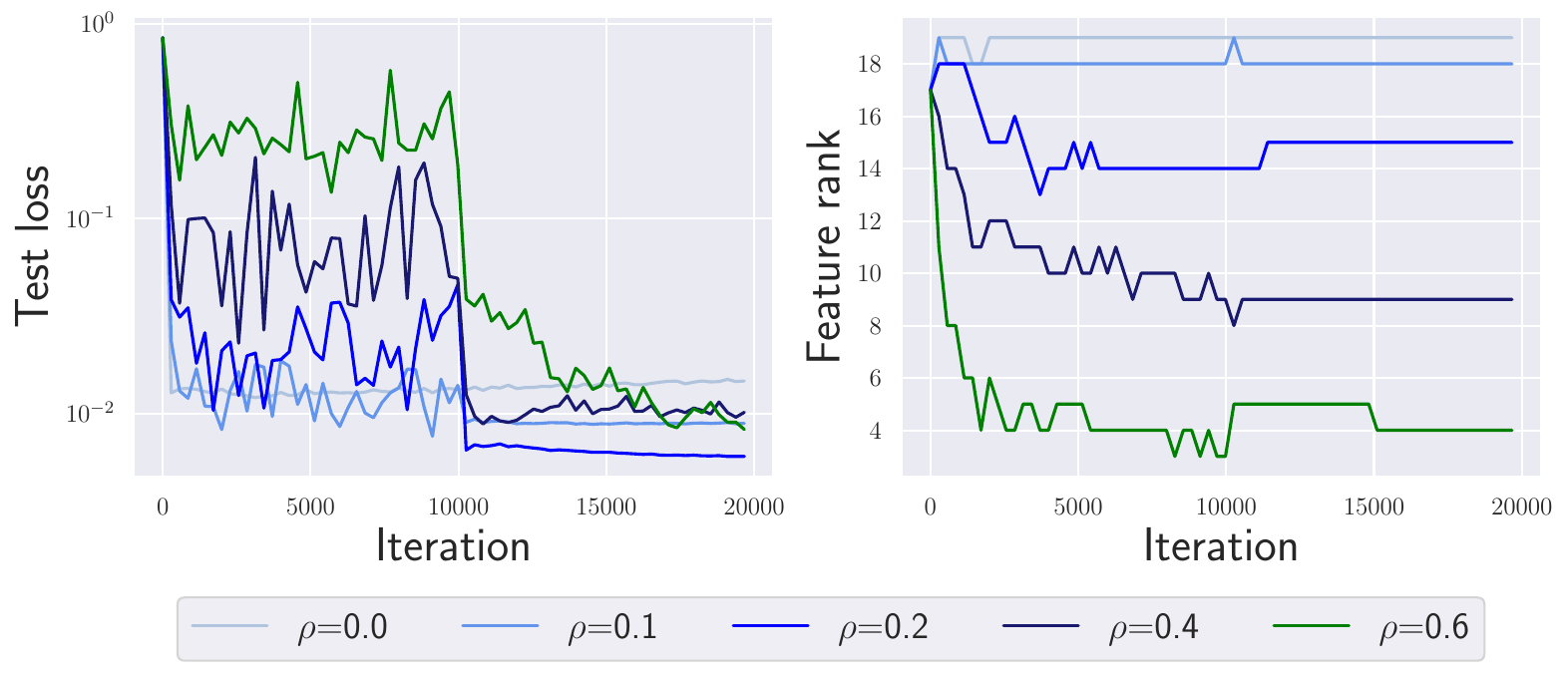} &
        \includegraphics[width=0.49\linewidth]{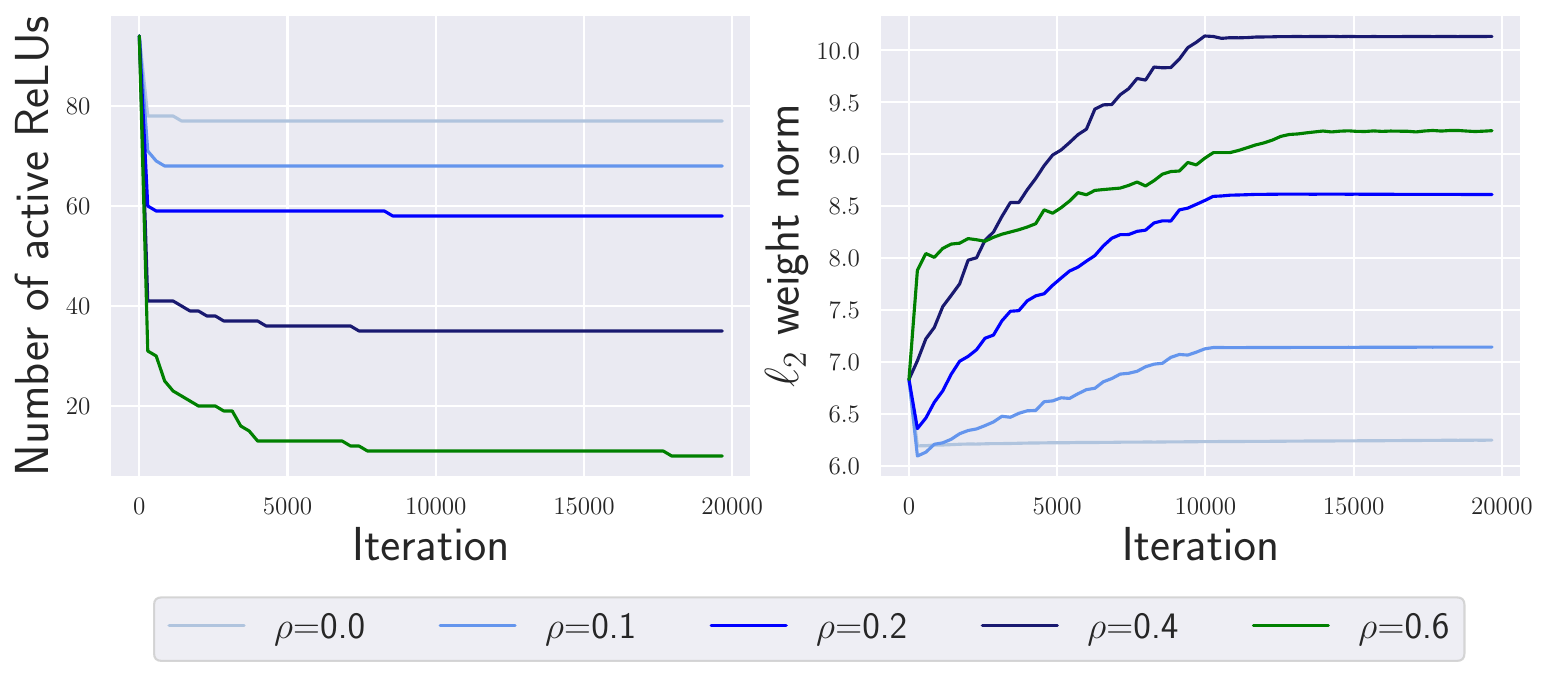}
    \end{tabular}
    \caption{\textbf{Two-layer ReLU networks in the teacher-student setup.} The models trained with a higher $\rho$ of SAM generalize better, %
    have a significantly smaller feature rank, smaller number of active ReLUs, and higher $\ell_2$ weight norm. Note that the generalization improvement is U-shaped in $\rho$ unlike the rank reduction which is monotonic.}
	\label{fig:fc_net_sam}
\end{figure}

\subsection{Theoretical argument for low-rank features in a two-layer ReLU network}

Let $\ell(\thetav)$ be the loss on example $(\xv, y)$ sampled by SGD and assume the squared loss for concreteness. We formulate our theoretical argument in the following proposition.  
\begin{proposition} \label{prop:grad_reg_single_ex}
    Every update of SAM contains a component that decreases all pre-activation values %
    $\{\langle w_j, x \rangle\}_{j=1}^m$
    of a two-layer ReLU network trained with the squared loss by a non-negative amount equal to
    $\eta \rho \nicefrac{\sqrt{\ell(\thetav)}}{\|\nabla f(\thetav)\|_2}  \sigma(\langle \wv_j, \xv \rangle) \|\xv\|_2^2$. %
\end{proposition}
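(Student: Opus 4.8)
The plan is to specialize the SAM update \eqref{eq:sambatch} to a single example and track how one step moves each pre-activation $\inner{\wv_j,\xv}$. Writing $r \defeq f(\thetav)-y$ for the residual with squared loss $\ell(\thetav)=\tfrac12 r^2$, the inner ascent step is $\epsv = \rho_t\,\nabla\ell(\thetav)$ with $\rho_t=\rho/\|\nabla\ell(\thetav)\|_2$; since $\nabla\ell(\thetav)=r\,\nabla f(\thetav)$, the residual magnitude cancels in the normalization and $\epsv = \tfrac{\rho\,\sign(r)}{\|\nabla f(\thetav)\|_2}\,\nabla f(\thetav)$. The crucial observation, which I would flag immediately, is that the \emph{sign} of this perturbation is dictated solely by $\sign(r)$. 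I would deliberately avoid the blunt expansion $\nabla\ell(\thetav+\epsv)\approx\nabla\ell(\thetav)+\nabla^2\ell(\thetav)\,\epsv$, since it hides the ReLU structure that drives the effect.

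First I would write out the two blocks of $\nabla f(\thetav)$ produced by the architecture, namely $\nabla_{a_j} f(\thetav)=\sigma(\inner{\wv_j,\xv})$ and $\nabla_{\wv_j} f(\thetav)=a_j\,\sigma'(\inner{\wv_j,\xv})\,\xv$. The first shows that the ascent step perturbs the output weight by $\epsv_{a_j}=\tfrac{\rho\,\sign(r)}{\|\nabla f(\thetav)\|_2}\,\sigma(\inner{\wv_j,\xv})$, a quantity proportional to the activation $\sigma(\inner{\wv_j,\xv})\ge 0$.

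Next I would substitute the perturbed parameters into the outer descent gradient and expand. The gradient with respect to $\wv_j$ at $\thetav+\epsv$ is $(f(\thetav+\epsv)-y)\,(a_j+\epsv_{a_j})\,\sigma'(\inner{\wv_j+\epsv_{\wv_j},\xv})\,\xv$, and among the resulting terms the one carrying the factor $\epsv_{a_j}$ is the SAM-specific component named in the statement. Using the ReLU identity $\sigma'(z)\,\sigma(z)=\sigma(z)$ to absorb the indicator, together with the sign cancellation $(f(\thetav+\epsv)-y)\,\sign(r)=|r|$, which is proportional to $\sqrt{\ell(\thetav)}$ once the output is taken to leading order in $\rho$, this component displaces $\wv_j$ by a multiple of $-\eta\,\tfrac{\rho\sqrt{\ell(\thetav)}}{\|\nabla f(\thetav)\|_2}\,\sigma(\inner{\wv_j,\xv})\,\xv$. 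Taking the inner product with $\xv$ and the step factor $-\eta$ converts this into a change of pre-activation equal to $-\eta\,\rho\,\tfrac{\sqrt{\ell(\thetav)}}{\|\nabla f(\thetav)\|_2}\,\sigma(\inner{\wv_j,\xv})\,\|\xv\|_2^2$, the claimed decrease. Non-negativity is then immediate because every factor is non-negative, and the decrease vanishes exactly on inactive neurons where $\sigma(\inner{\wv_j,\xv})=0$, which is precisely the pruning pressure on active units.

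The main obstacle is making this isolated component well-defined and sign-definite without the two simplifications used above: that the activation pattern is preserved under the ascent step, so $\sigma'(\inner{\wv_j+\epsv_{\wv_j},\xv})=\sigma'(\inner{\wv_j,\xv})$, and that $f(\thetav+\epsv)-y$ retains the sign of $r$ so the residual sign cancels cleanly. Both hold to leading order in $\rho$ and generically away from activation boundaries, so the honest formulation is either a first-order-in-$\rho$ statement or a definition of the component as exactly the $\epsv_{a_j}$-term evaluated with the unperturbed residual and indicator. I would also reconcile the multiplicative constant against the precise squared-loss normalization, since the factor relating $|r|$ to $\sqrt{\ell(\thetav)}$ is what fixes the coefficient in the statement.
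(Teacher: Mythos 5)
Your argument is correct and reaches exactly the paper's regularization component, but it is organized differently. The paper first Taylor-expands the SAM gradient as $\nabla \ell(\thetav + \rho \nabla \ell / \|\nabla \ell\|_2) = \nabla\left[\ell(\thetav) + \rho\|\nabla \ell(\thetav)\|_2\right] + \mathcal{O}(\rho^2)$, interprets one SAM step as a gradient step on the penalized objective $\ell + \rho\|\nabla\ell\|_2$, computes $\|\nabla \ell(\thetav)\|_2 = \sqrt{\ell(\thetav)}\,\|\nabla f(\thetav)\|_2$ in closed form, and then differentiates the penalty with respect to $\wv_j$; the term $\rho\sqrt{\ell}\,\nabla_{\wv_j}\|\nabla f\|_2$ yields $\rho\,\nicefrac{\sqrt{\ell(\thetav)}}{\|\nabla f(\thetav)\|_2}\,\sigma(\langle \wv_j,\xv\rangle)\,\xv$ via the same identity $\sigma'\sigma = \sigma$ you use. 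You instead expand the outer gradient at the explicitly perturbed point and isolate the cross-term carrying $\epsv_{a_j}$ --- which is the same first-order object, since that cross-term is precisely the $(\wv_j, a_j)$ block of $\nabla^2\ell$ applied to the ascent direction. Your route buys a more mechanistic reading: the ascent step inflates $a_j$ by an amount proportional to the activation and sign-aligned with the residual, and this inflation feeds back as a force pushing $\langle \wv_j,\xv\rangle$ down; it also makes explicit the genericity assumptions (activation pattern and residual sign preserved under the perturbation) that the paper absorbs silently into $\mathcal{O}(\rho^2)$ and the remark that second-order ReLU terms vanish almost everywhere. The paper's route buys the penalized-objective interpretation, which compactly delivers the ``larger effective learning rate'' reading of the data-fitting term and connects directly to the gradient-norm regularization $\sum_i \|\nabla\ell_i(\thetav)\|_2$ validated empirically in Fig.~\ref{fig:fc_net_grad_reg}. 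One detail you rightly flagged: the paper's convention is $\ell(\thetav) = (f(\thetav)-y)^2$ without the factor $\nicefrac{1}{2}$, so $\sqrt{\ell} = |r|$ exactly; with your $\ell = \frac{1}{2}r^2$ the coefficient would pick up a spurious $\sqrt{2}$, so adopt the paper's normalization to match the stated constant.
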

\begin{proof}[Proof]
    We have for the update of SAM:
    \begin{align*}%
        \nabla \ell\left(\thetav + \rho \frac{\nabla \ell(\thetav)}{\|\nabla \ell(\thetav)\|_2}\right) = \nabla \ell(\thetav) + \rho \nabla^2 \ell(\thetav) \frac{\nabla \ell(\thetav)}{\|\nabla \ell(\thetav)\|_2} + \mathcal{O}(\rho^2) =
        \nabla \left[ \ell(\thetav) + \rho \| \nabla \ell(\thetav)\|_2 + \mathcal{O}(\rho^2) \right].
    \end{align*}
    Thus, under the first-order approximation, a step of SAM corresponds to a gradient update on the regularized objective $\ell(\thetav) + \rho \| \nabla \ell(\thetav)\|_2$. Now recall that the layerwise gradients of a two-layer ReLU network can be written as:
    \begin{align*}
        \nabla_{\av} \ell(\thetav) = \ell'(\thetav) \cdot \sigma(\vW \xv), \quad \nabla_{\wv_j} \ell(\thetav) = \ell'(\thetav) \cdot a_j \sigma'(\langle \wv_j, \xv\rangle) \xv. 
    \end{align*}
    Then a direct computation gives us the following expression for the full gradient norm: 
    \begin{align*}
        \| \nabla \ell(\thetav)\|_2 &= |r| \cdot \|\nabla f(\thetav)\|_2 = |\langle \av, \sigma(\vW\xv) \rangle - y| \sqrt{\| \sigma(\vW\xv) \|_2^2 + \|\xv\|_2^2 \cdot \| \av \odot \sigma'(\vW\xv)\|_2^2},
    \end{align*} 
    where $\odot$ denotes element-wise multiplication and $r$ denotes the residual $f(\thetav) - y$. %
    Then the update of $\wv_j$ for neuron $j$ 
    on each step of SAM with step size $\eta$ can be written as:
    \begin{align*}
        \wv_j &:= \wv_j - \eta (\nabla \ell(\thetav) + \rho \nabla \|\nabla \ell(\thetav)\|_2) + \mathcal{O}(\rho^2) \\
        \wv_j &:= \wv_j 
        - \underbrace{\eta r \left(1 + \rho \nicefrac{\|\nabla f(\thetav)\|_2}{\sqrt{\ell(\thetav)}}\right) a_j \sigma'(\langle \wv_j, \xv \rangle) \xv}_{\text{data fitting term}} 
        - \underbrace{\eta \rho \nicefrac{\sqrt{\ell(\thetav)}}{\|\nabla f(\thetav)\|_2} \sigma(\langle \wv_j, \xv \rangle) \xv}_{\text{regularization component}} + \mathcal{O}(\rho^2)
    \end{align*}
    where we used the fact that $\sigma'(\langle \wv_j, \xv \rangle) \sigma(\langle \wv_j, \xv \rangle) = \sigma(\langle \wv_j, \xv \rangle)$ and second-order terms are zero almost everywhere for ReLUs.
    The data fitting term is the same as normal gradient but with a larger effective learning rate $\eta (1 + \rho \|\nabla f(\thetav)\|_2 / \sqrt{\ell(\thetav)})$ instead of $\eta$.
    The most interesting term is the one coming from $\nabla \|\nabla f(\thetav)\|_2$ which drives pre-activations $\langle \wv_j, \xv \rangle$ to negative values \textit{on every step of SAM} which can be seen directly from the update rule: %
    \begin{align*}
        \langle \wv_j, \xv \rangle := \langle \wv_j, \xv \rangle 
        &- \eta r \left(1 +  \rho \nicefrac{\|\nabla f(\thetav)\|_2}{\sqrt{\ell(\thetav)}}\right) a_j \sigma'(\langle \wv_j, \xv \rangle) \|\xv\|_2^2 \\ 
        &- \eta \rho \nicefrac{\sqrt{\ell(\thetav)}}{\|\nabla f(\thetav)\|_2} \sigma(\langle \wv_j, \xv \rangle) \|\xv\|_2^2 +  \mathcal{O}(\rho^2),
    \end{align*}
    where we note that $\eta \rho \nicefrac{\sqrt{\ell(\thetav)}}{\|\nabla f(\thetav)\|_2} \sigma(\langle \wv_j, \xv \rangle) \|\xv\|_2^2$ is always non-negative for ReLU.
\end{proof}
We confirm empirically in Figure~\ref{fig:fc_net_grad_reg} in Appendix that using only the first-order terms in the SAM update leads to the same effect in all key metrics including the feature rank and the number of active ReLUs. 
Overall, this mechanism suggests how SAM can \textit{suppress redundant activations} if they are not needed to fit the training data. Moreover, this effect takes place at every iteration of SGD but is stronger at the beginning of training since it is proportional to the loss $\sqrt{\ell(\thetav)}$. 
Moreover, a similar argument can be made for a multi-layer network which can explain why the low-rank effect occurs at \textit{multiple} layers since the $\|\nabla f(\thetav)\|_2$ term has activations of all layers in it. Also it suggests an intuitive picture of what a flat minimum of SAM means in terms of the learned function: a minimum that corresponds to a network sparsely activated on the training data.

\section{Investigation of low-rank mechanisms on deep networks} \label{sec:understanding_deep_nets}
In this section, we confirm that the low-rank mechanism described above can also occur in post-activation ResNets, although the overall rank reduction mechanism can be more complex, particularly for networks with pre-activation skip connections and self-attention layers.

\myparagraph{Post-activation ResNet on CIFAR-10.}
We consider a standard, \textit{post-activation} ResNet-18 \citep{he2016deep} with $4$ residual superblocks trained on CIFAR-10 in the minimal setting (i.e., small step sizes, no momentum, no weight decay). 
Since values in the residual stream of this network (unlike for PreAct ResNets) are taken \textit{after} ReLU, we can expect to see the ReLU pruning effect described in the previous section. 
We count a ReLU as \textit{active} if at least on one training example, it achieves at least $5\%$ of the maximum value in the feature matrix. 
We plot the results in Figure~\ref{fig:cifar10_low_rank_investigation} which confirms the ReLU pruning effect: the number of active ReLUs decreases with $\rho$ at later residual blocks. 
For example, for $\rho=0.6$, the number of active ReLUs at block 4 reaches $77\%$, which directly contributes to a lower feature rank, compared to $100\%$ of standard training.
Finally, we note that this mechanism is not likely to be the only one contributing to the lower rank since the trends for the feature rank and number of active ReLUs do not perfectly agree. 
\begin{figure}[t]
	\centering
    \includegraphics[width=0.45\linewidth]{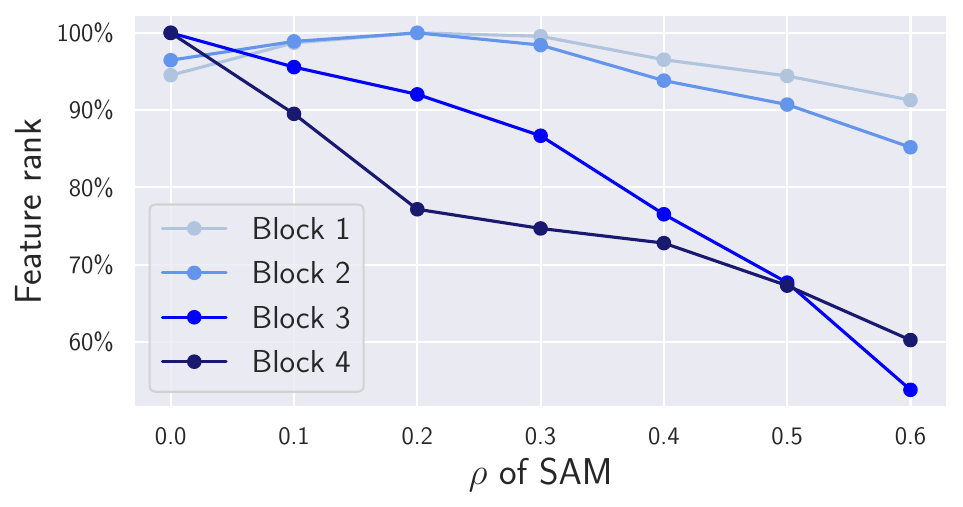}
    \quad
    \includegraphics[width=0.45\linewidth]{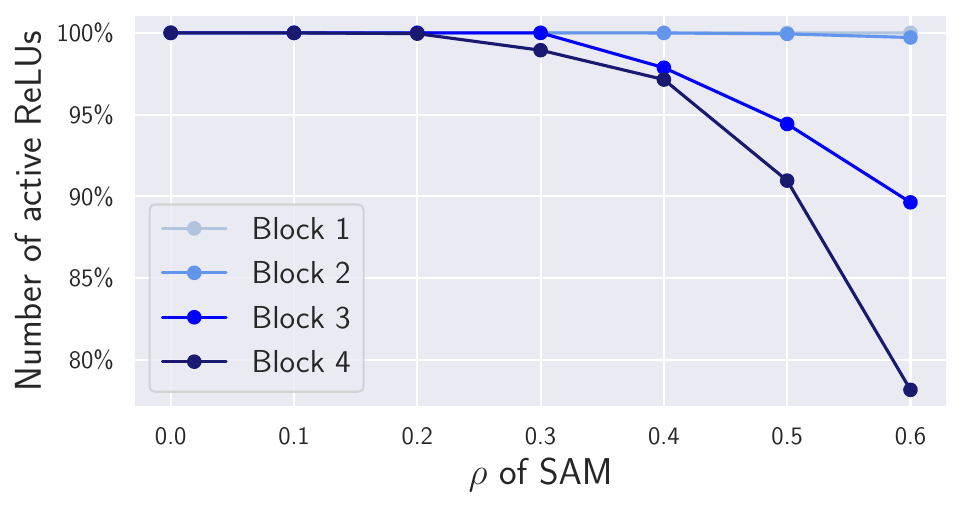}
	\caption{\textbf{Post-activation ResNet-18 on CIFAR-10.} Both the feature rank and the number of active ReLUs tend to decrease with the $\rho$ of SAM, particularly at later ResNet blocks.}
	\label{fig:cifar10_low_rank_investigation}
\end{figure}

\myparagraph{Pre-activation ViT on MS-COCO.}
\begin{figure}[t]
	\centering
    \footnotesize
    \includegraphics[width=0.32\linewidth]{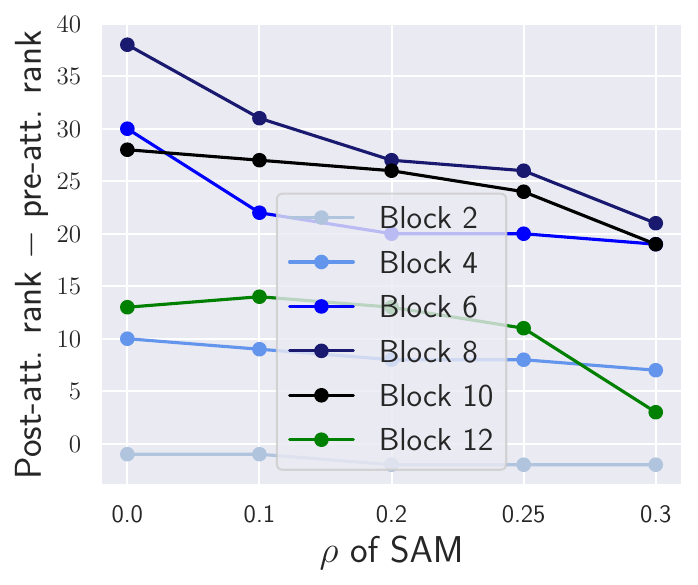}
    \includegraphics[width=0.32\linewidth]{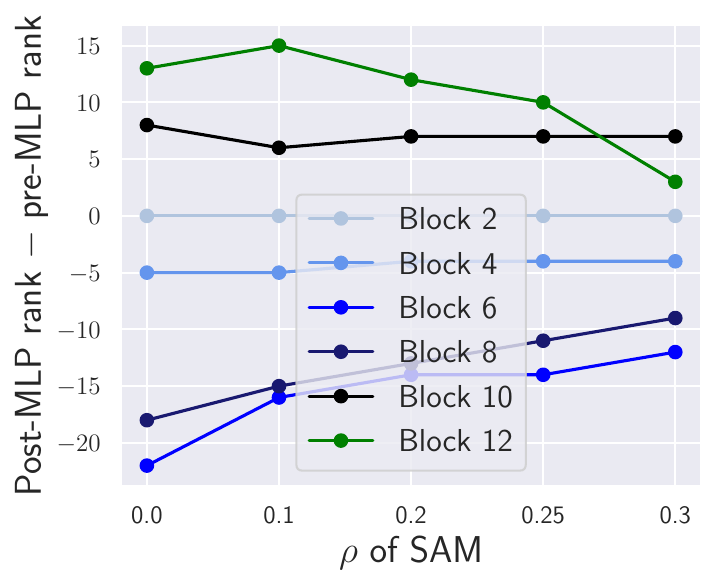}
    \includegraphics[width=0.32\linewidth]{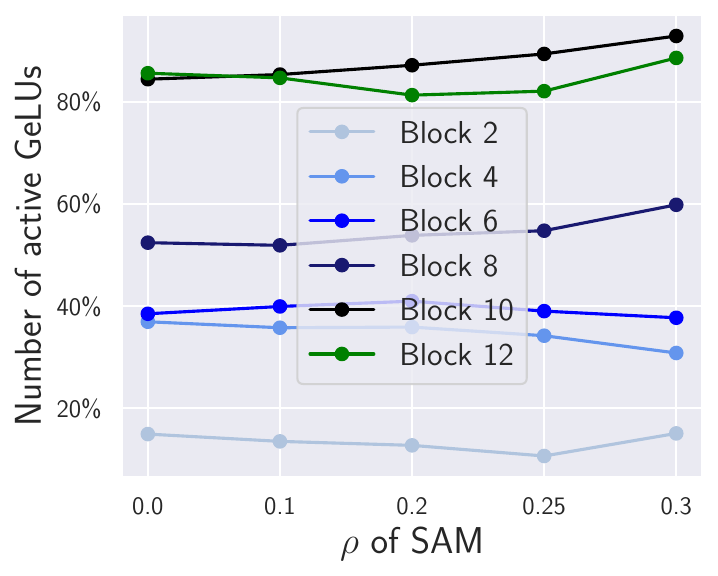}
	\caption{\textbf{ViT image encoder on MS-COCO.} We show the difference in the feature ranks before and after attention subblocks (\textit{left}), before and after MLP subblocks (\textit{middle}), and the number of active GeLUs inside of residual branches (\textit{right}).}
	\label{fig:mscoco_image_feature_ranks_main}
\end{figure}
Next, we consider the ViT with 12 \textit{pre-activation} residual blocks used in our experiments on MS-COCO. 
First, we show how the feature ranks change in the \textit{residual stream} after attention and MLP subblocks in Figure~\ref{fig:mscoco_image_feature_ranks_main} (\textit{left}, \textit{middle}). We observe that almost each attention block gradually increases the feature rank, \textit{but the increase is smaller for models with higher $\rho$}. At the same time, MLP blocks can both increase and decrease the rank, but for higher $\rho$, the difference between post-MLP rank and pre-MLP rank tends to be larger. %
Thus, we conclude that the rank reduction effect is coming primarily from the attention blocks. 
Moreover, we do not observe any coordinate-aligned sparsity in the residual stream.
Additionally, we plot the number of active GeLUs inside of MLP subblocks in Figure~\ref{fig:mscoco_image_feature_ranks_main} (\textit{right}).  
Since GeLU can be seen as a differentiable approximation of ReLU, the analogous notion of active GeLU units also makes sense. 
However, we do not observe any decrease in the number of GeLUs over the $\rho$ of SAM for these models. 
We show more detailed plots about the behavior of feature ranks in Figure~\ref{fig:mscoco_detailed_investigation} in Appendix. 
Moreover, in Figure~\ref{fig:mscoco_text_head_feature_ranks} we show that even when the text encoder remains frozen and only a single linear layer is trained on top of it, the low-rank effect is still observed. 
This indicates that SAM can effectively utilize even a \textit{single matrix multiplication} to achieve features of reduced rank. 
Overall, we conclude that the mechanism behind the low-rank effect of SAM can vary significantly, and SAM can leverage different components of the network, including activations, attention layers, and even individual linear layers.

\section{Do low-rank features lead to better generalization on natural data?} \label{sec:rank_vs_generalization}
We conclude the discussion on the low-rank features of SAM by demonstrating that directly inducing low-rank features \textit{does not} result in improved generalization for natural data such as MS-COCO.

\myparagraph{Setup.}
We induce the low-rank directly at the last layer by using a \textit{linear bottleneck layer} which is also known as the \textit{Burer-Monteiro factorization} \citep{burer2003nonlinear}: we parametrize the weight matrix $W^{(L)} \in \R^{d_{L-1} \times d_L}$ at the last layer $L$ as $W^{(L)} = U^{(L)} V^{(L)}$ where $U^{(L)} \in \R^{d_{L-1} \times h}$ and $V^{(L)} \in \R^{h \times d_L}$. %
We perform training on MS-COCO by varying the inner dimension $h$ for the last layers of both image and text encoders, while keeping the remaining training process the same as described in Section~\ref{sec:main_experiments}. 

\myparagraph{Observations.}
We plot the results in Figure~\ref{fig:mscoco_sam_vs_bottleneck_layers} where we observe that enforcing low-rank features alone \textit{does not} consistently enhance generalization. Introducing bottleneck layers only marginally improves the imagewise retrieval error (from $23.3\%$ to $22.7\%$)  but negatively affects textwise retrieval (from $22.2\%$ to $22.9\%$). In contrast, SAM demonstrates a significant improvement of $-4.7\%$ and $-3.0\%$ in the respective retrieval tasks. %
Thus, we conclude that SAM stands out in its ability to simultaneously enhance generalization and select low-rank features in a data-adaptive manner.
\begin{figure}[!ht]
    \centering
    \footnotesize 
    \includegraphics[width=0.247\linewidth]{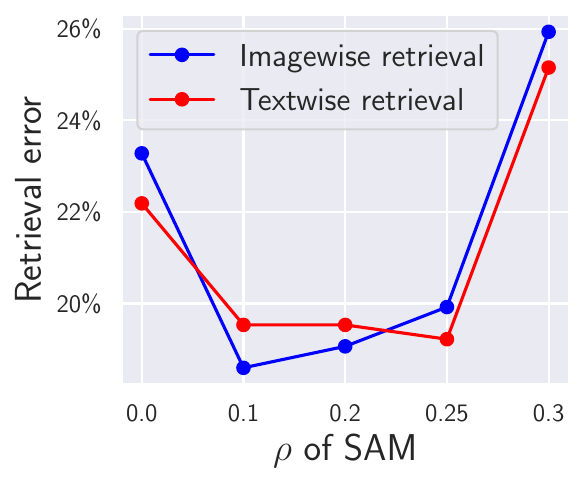}
    \includegraphics[width=0.235\linewidth]{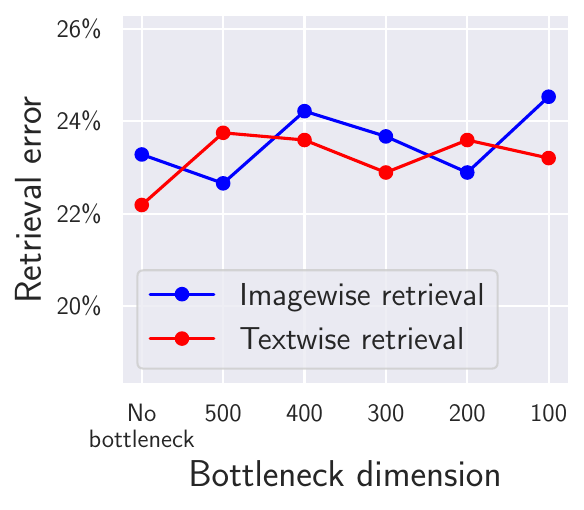} 
    \ \ 
    \includegraphics[width=0.243\linewidth]{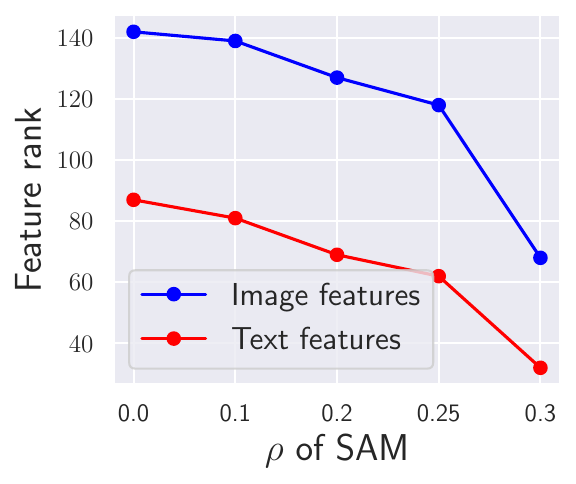}
    \includegraphics[width=0.23\linewidth]{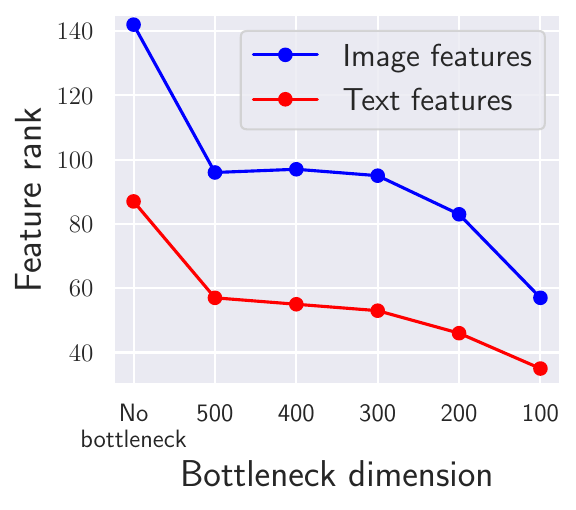}
    \caption{\textbf{Contrastive image-text training on MS-COCO.} We compare the effect of SAM and linear bottleneck layers on the retrieval error and feature ranks of the last layer. We observe that linear bottleneck layers do not improve the retrieval error at all, unlike SAM.}
    \label{fig:mscoco_sam_vs_bottleneck_layers}
\end{figure}

\section{Discussion and future directions}
Finally, we discuss the implications of low-rank features learned by SAM as well as future directions. 

\myparagraph{More efficient retrieval.}
As illustrated in Section~\ref{sec:main_experiments}, despite the lower rank of the features learned by SAM, their effectiveness for \textit{nearest neighbor retrieval} is preserved or even improved. 
This suggests that one can achieve faster retrieval by reducing the dimensionality of the feature space by using, for example, only the top-$k$ components of PCA. 
Furthermore, the low-rank bias of SAM appears to be \textit{data-adaptive}, meaning that it adjusts to the data structure rather than relying on a fixed predetermined dimension.

\myparagraph{More efficient feature quantization.} 
On a related note, having features of reduced rank can also be helpful to speed up \textit{feature quantization}. 
This is confirmed, in particular, in the DALL·E~2 paper \citep{ramesh2022hierarchical} which reports that this effect combined with PCA can lead to a $3\times$ reduction in the number of tokens predicted during inference, and also can improve the training stability. 

\myparagraph{Understanding the features learned by SAM.} 
Developing a better understanding of the features learned by popular methods such as SAM is an important goal on its own. 
The low-rank feature bias can be advantageous in scenarios where the data is generated by a low-rank function. %
However, for realistic datasets like MS-COCO, the impact of this effect on generalization is not as significant, yet it still remains a valuable side effect. 

\myparagraph{Future directions.}
Our findings suggest that low-rank features alone do not enhance generalization for natural data beyond simple teacher-student setups. Consequently, the low-rank effect of SAM appears to be a useful \textit{side effect}. Therefore, understanding the impact of SAM on learned features that leads to generalization improvements on natural data remains an open question. Additionally, further theoretical analysis of the low-rank effect of SAM for more complex architectures, such as those involving skip-connections and self-attention layers, would be of great interest.

\section*{Acknowledgements}
We thank the anonymous reviewers for their suggestions that helped to improve the paper. 
The work is supported by the Google/EPFL Research Collab program. 
M.A. was additionally supported by the Google Fellowship and Open Phil AI Fellowship.

\bibliographystyle{abbrvnat}
\bibliography{literature}

\clearpage

\appendix

\begin{center}
    \ \\
	\Large\textbf{Appendix}
\end{center}

The appendix is organized as follows:
\begin{itemize}[topsep=0pt]
    \item Section~\ref{app:exp_details}: implementation details and hyperparameters of all experiments.
    \item Section~\ref{app:additional_exps_clsf}: additional results for classification experiments on CIFAR-10, CIFAR-100, and Tiny ImageNet.
    \item Section~\ref{app:additional_exps_clip}: additional results for contrastive image-text training on MS-COCO.
    \item Section~\ref{app:additional_exps_two_layer}: additional results for two-layer networks.
    \item Section~\ref{app:additional_exps_investigations}: additional results for investigations of the low-rank mechanism on deep networks.
\end{itemize}

\section{Experimental details} \label{app:exp_details}
In this section, we specify the implementation details and hyperparameters of all our experiments.

\myparagraph{Experiments on CIFAR-10, CIFAR-100, Tiny ImageNet.}
We train pre-activation ResNets-18 \citep{he2016identity} on CIFAR-10, CIFAR-100, Tiny ImageNet for experiments in Section~\ref{sec:main_experiments} and post-activation ResNets-18 for experiments in Section~\ref{sec:understanding_deep_nets}.
We train these models with batch size $256$ for $200$ epochs using standard augmentations (random crops and random mirroring). 
For the minimal setting, we use plain SGD with the learning rate $0.05$. 
For the state-of-the-art setting, we use SGD with the learning rate $0.1$ (decayed by a factor of $10\times$ after $50\%$ and $90\%$ epochs), momentum parameter $0.9$, weight decay $0.0005$. 
We train models with a grid of different parameters $\rho$ of SAM where we adjust the grid such that it includes both values of $\rho$ that improve generalization (small $\rho$, usually in the range $[0.05, 0.2]$) but also those which slightly degrade it (large $\rho$, usually larger than $0.5$). 
We evaluate the feature rank on $10\,240$ training examples by counting the minimal number of PCA components needed to span $99\%$ variance.

\myparagraph{Experiments on ImageNet-1k.}
We refer to the original papers \citet{dosovitskiy2021an}, \citet{tolstikhin2021mlpmixer}, and \citet{chen2022when} for the details on how these models were trained. 
We evaluate the feature rank on $12\,800$ training examples by counting the minimal number of PCA components needed to span $99\%$ variance.

\myparagraph{Experiments on MS-COCO.}
We fine-tune the R+Ti/16 vision transformer from \citet{steiner2021train} and BERT from \citet{devlin2018bert} on MS-COCO using the InfoNCE contrastive loss.  
We add a linear head to each of the encoders with the final feature dimension of $768$.
We use Adam \citep{kingma2014adam} with learning rate $0.0001$ which is decayed down to $0$ using a cosine decay schedule.
We train these models with batch size $128$ for $25$ epochs without data augmentations.
We evaluate the feature rank on $1\,280$ training examples by counting the minimal number of PCA components needed to span $99\%$ variance. 
To aggregate the features over different tokens, we always take the embedding of the first token since this is what is used for classification of the pretrained model.

\myparagraph{Experiments on two-layer networks in the teacher-student setup.}
We use the teacher-student setup with $3$ teacher neurons, $m=100$ student neurons, and input dimension $d=3$. 
We consider training with stochastic gradients based on mini batch size of $1$ using the learning rate of $0.1$. 
We initialize the weights of the teacher network with the standard deviation of $1.0$ and the student network with the standard deviation of $0.3$.
We use SAM for the first $50\%$ of iterations and then disable it to achieve full convergence as we observed that running SAM, especially with high $\rho$, hinders convergence. 
We evaluate the feature rank on all $20$ training examples by counting the minimal number of PCA components needed to span $99.99\%$ variance. 
We exceptionally choose this PCA threshold (instead of $99\%$ as in all other experiments) to obtain a smooth trend of rank reduction and clear separation between different $\rho$.

\myparagraph{Computational resources.}
We performed all experiments on a single Nvidia A100 GPU where we used an internal cluster for all experiments except experiments on MS-COCO, for which we used a cloud provider. 
A typical training run on CIFAR-10 and CIFAR-100 takes around $2$ hours, on Tiny ImageNet around $5$ hours, and on MS-COCO around $7$ hours.

\section{Additional results for classification} \label{app:additional_exps_clsf}
In Section~\ref{sec:main_experiments}, we focused on the \textit{original} SAM as introduced in \citet{foret2021sharpnessaware} since it is still the most popular SAM variant in the community and it is implemented without any further approximations. 
However, to provide a comprehensive comparison of SAM variants, we also include results on more recent variants of SAM such as ASAM (Adaptive Sharpness-Aware Minimization) \citep{kwon2021asam} and GAM (Gradient norm Aware Minimization) \citep{zhang2023gradient}. Similarly to our main experiments in Section~\ref{sec:main_experiments}, we also use ResNets on CIFAR-10. We select the default settings given in their code repositories (which includes a smaller ResNet for ASAM compared to GAM) and vary only the perturbation radius $\rho$. We report the results in Table~\ref{tab:asam_gam} that confirm that the low-rank observation also extends to other recent SAM variants.
\begin{table}[ht]
    \centering
    \begin{tabular}{llllll}
        \textbf{$\rho$ of ASAM} & 0.0 & 0.5 & 1.0 & 2.0 & 4.0 \\
        \hline
        \textbf{Test error} & 7.29\% & 6.53\% & 6.38\% & 7.12\% & 10.64\% \\
        \hline
        \textbf{Feature rank} & 5048 & 4801 & 4699 & 4578 & 4383 \\
    \end{tabular}
    \ \\ \ \\ \ \\
    \begin{tabular}{llllll}
        \textbf{$\rho$ of GAM} & 0.0 & 0.2 & 0.4 & 0.8 & 1.6 \\
        \hline
        \textbf{Test error} & 4.04\% & 3.65\% & 3.64\% & 3.81\% & 4.81\% \\
        \hline
        \textbf{Feature rank} & 7633 & 7381 & 7303 & 6897 & 6927 
    \end{tabular}
    \vspace{3mm}
    \caption{Results for more recent variants of SAM: ASAM \citep{kwon2021asam} and GAM \citep{zhang2023gradient}.}
    \label{tab:asam_gam}
\end{table}

In Figure~\ref{fig:train_vs_test_ranks}, we plot feature ranks computed on the training and test sets of CIFAR-10 and CIFAR-100 for post-activation and pre-activation ResNets-18, respectively, taken at the last epoch. 
We select two different architectures to check the generality of our findings. 
We observe that the feature ranks stay very close, no matter if we evaluate them on the training or test sets. 
Thus, we conclude that the low-rank effect of SAM generalizes also to unseen samples.
\begin{figure}[ht]
	\centering
    \begin{tabular}{cc}
        \textbf{Post-activation ResNet-18 on CIFAR-10} & \textbf{Pre-activation ResNet-18 on CIFAR-100}\\
    	\includegraphics[width=0.45\linewidth]{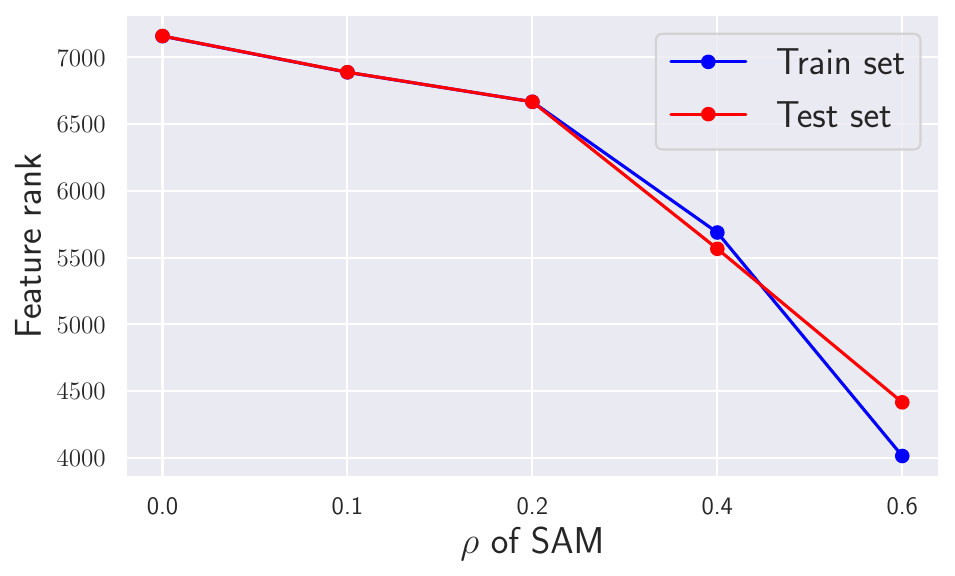} &
    	\includegraphics[width=0.45\linewidth]{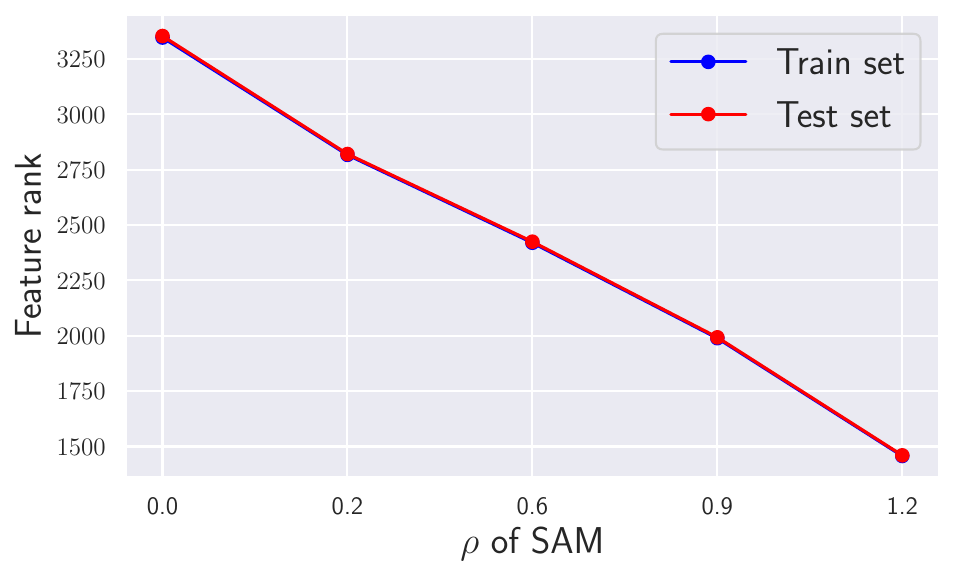}
    \end{tabular}
	\caption{\textbf{ResNets-18 on CIFAR-10 and CIFAR-100.} Feature ranks computed on the training and test sets are very similar to each other. We conclude that the low-rank effect of SAM generalizes also to unseen samples.}
	\label{fig:train_vs_test_ranks}
\end{figure}

In Figure~\ref{fig:diff_variance_threshold}, we plot feature ranks for $95\%$, $99\%$, $99.9\%$ PCA variance. 
Feature ranks computed with different PCA variance thresholds exhibit very similar trends.
We observe that the overall trend and relative ranking between models trained with different $\rho$ is unaffected by the choice of the PCA variance threshold. 
\begin{figure}[ht]
    \centering
    \begin{tabular}{ccc}
        \textbf{95\% PCA variance} & \textbf{99\% PCA variance} & \textbf{99.9\% PCA variance} \\
        & \scriptsize (reported in the main part) & \\
        \hline
        \multicolumn{3}{c}{\footnotesize \textbf{CIFAR-10, minimal setting}} \\
        \includegraphics[width=0.3\linewidth]{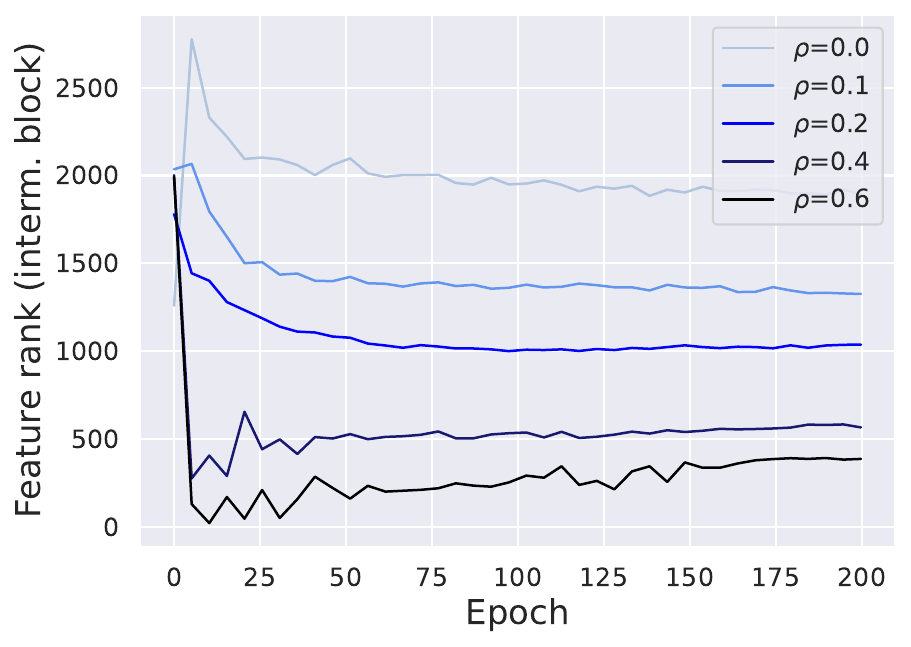} &
        \includegraphics[width=0.3\linewidth]{sam_low_rank-feature_rank-i_layer=0-dataset=cifar10_model=resnet18_exp_name=sam_low_rank_basic_lr=_percentile=0.99.pdf} &
        \includegraphics[width=0.3\linewidth]{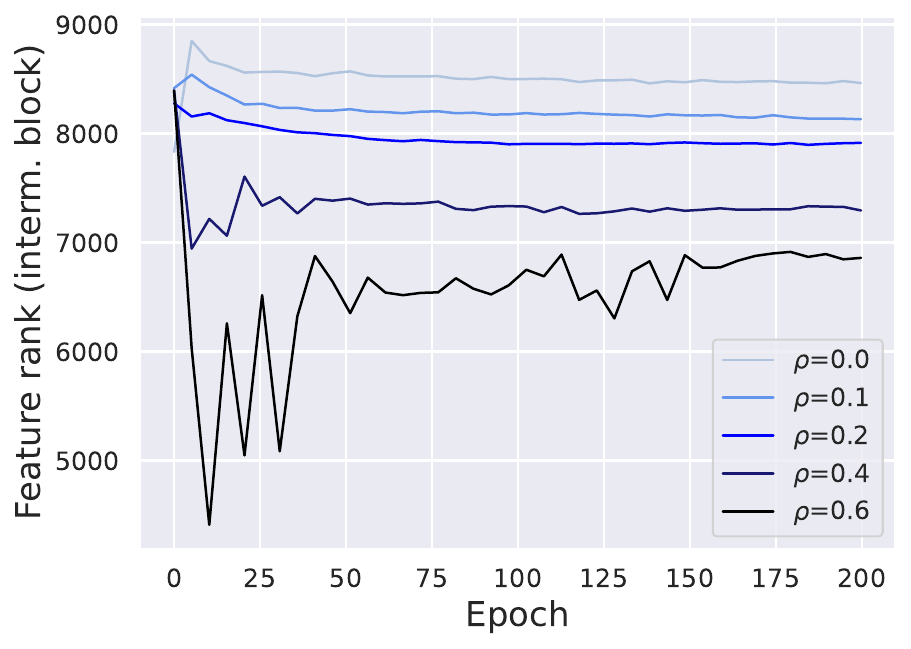}\\
        \multicolumn{3}{c}{\footnotesize \textbf{CIFAR-10, state-of-the-art setting}} \\
        \includegraphics[width=0.3\linewidth]{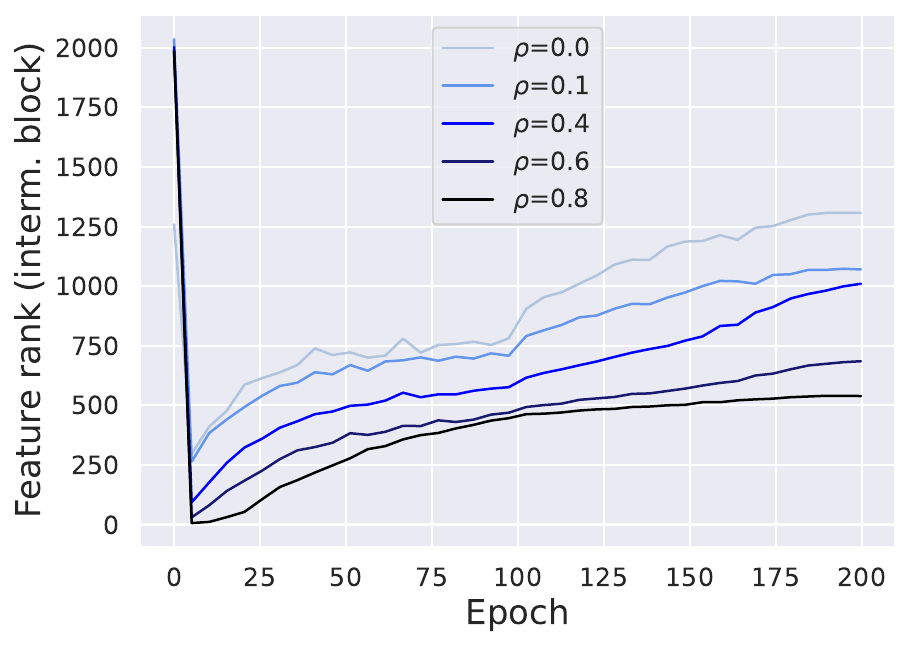} &
        \includegraphics[width=0.3\linewidth]{sam_low_rank-feature_rank-i_layer=0-dataset=cifar10_model=resnet18_exp_name=sam_low_rank_augm_sota_lr=_percentile=0.99.pdf} &
        \includegraphics[width=0.3\linewidth]{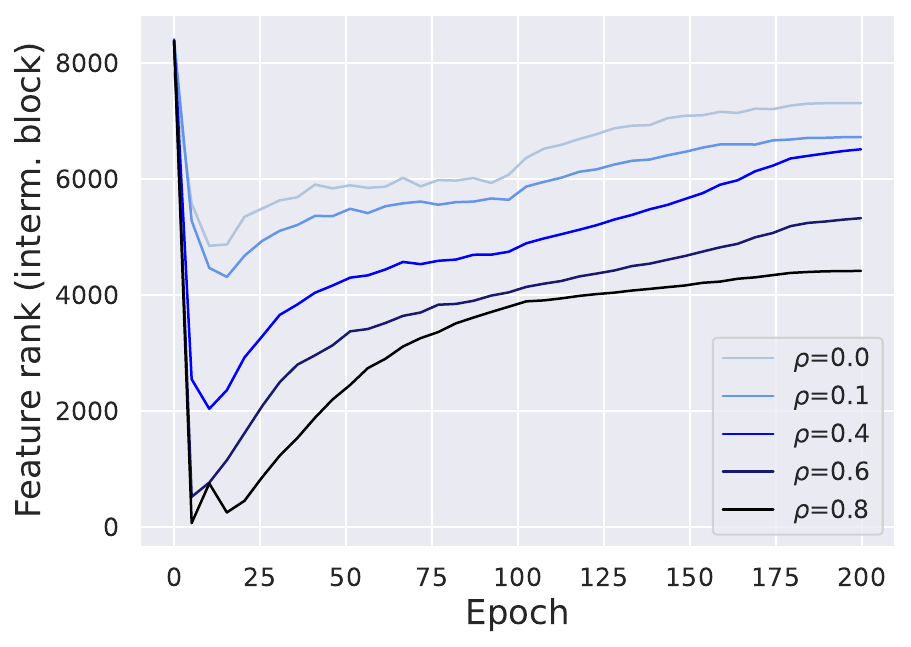}\\

        \multicolumn{3}{c}{\footnotesize \textbf{CIFAR-100, minimal setting}} \\
        \includegraphics[width=0.3\linewidth]{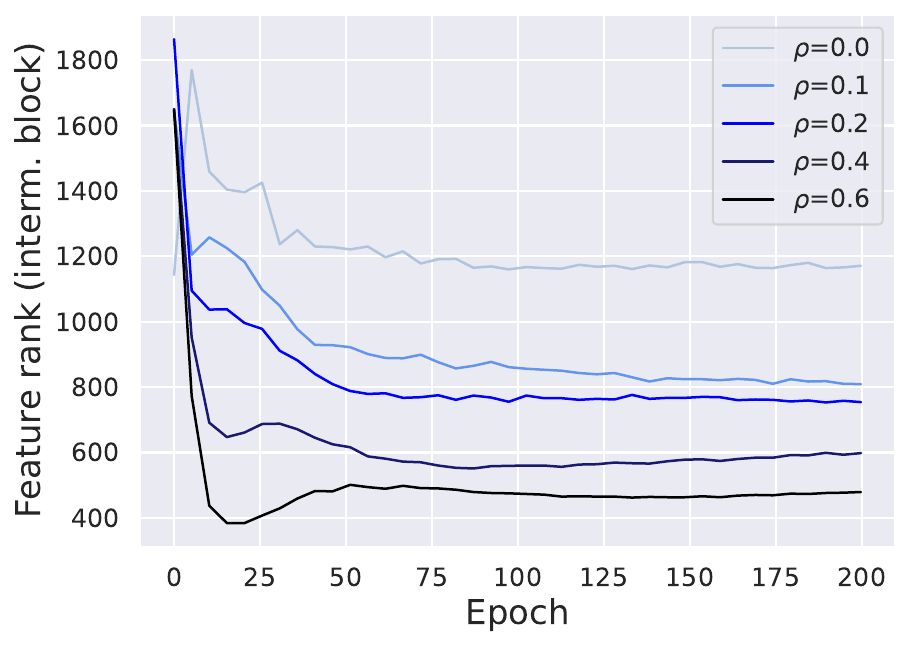} &
        \includegraphics[width=0.3\linewidth]{sam_low_rank-feature_rank-i_layer=0-dataset=cifar100_model=resnet18_exp_name=sam_low_rank_basic_lr=_percentile=0.99.pdf} &
        \includegraphics[width=0.3\linewidth]{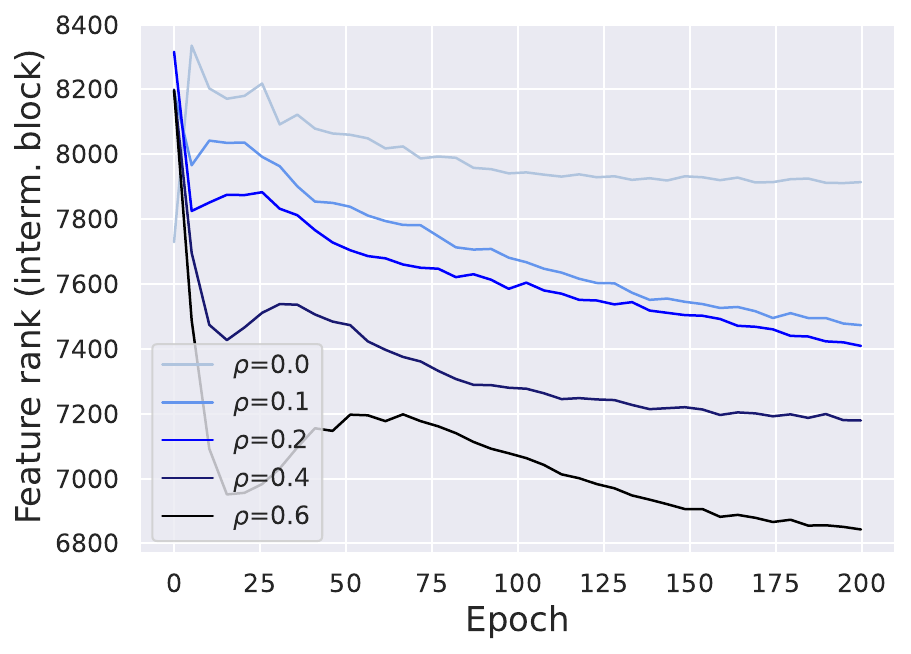}\\
        \multicolumn{3}{c}{\footnotesize \textbf{CIFAR-100, state-of-the-art setting}} \\
        \includegraphics[width=0.3\linewidth]{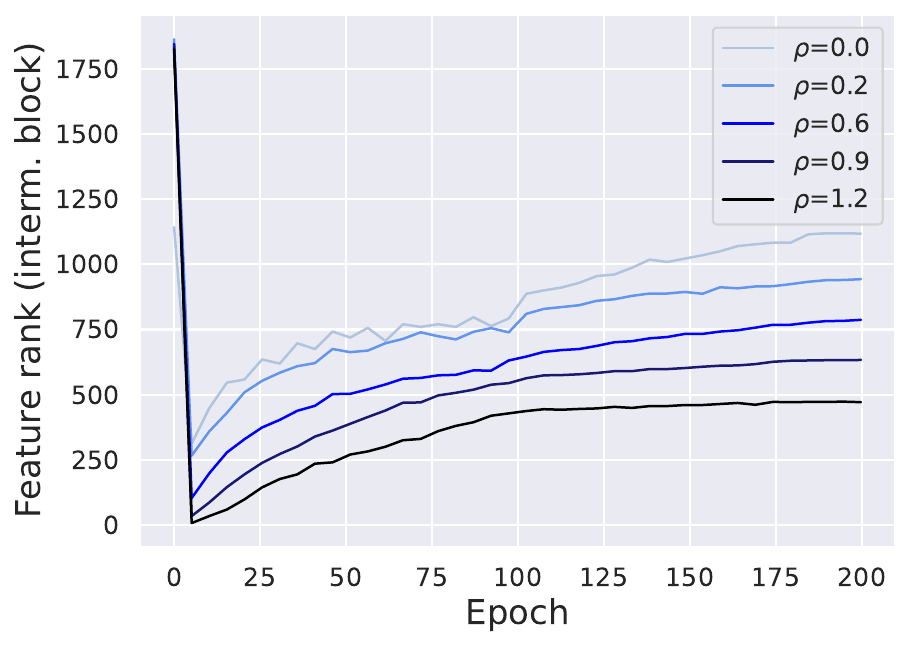} &
        \includegraphics[width=0.3\linewidth]{sam_low_rank-feature_rank-i_layer=0-dataset=cifar100_model=resnet18_exp_name=sam_low_rank_augm_sota_lr=_percentile=0.99.pdf} &
        \includegraphics[width=0.3\linewidth]{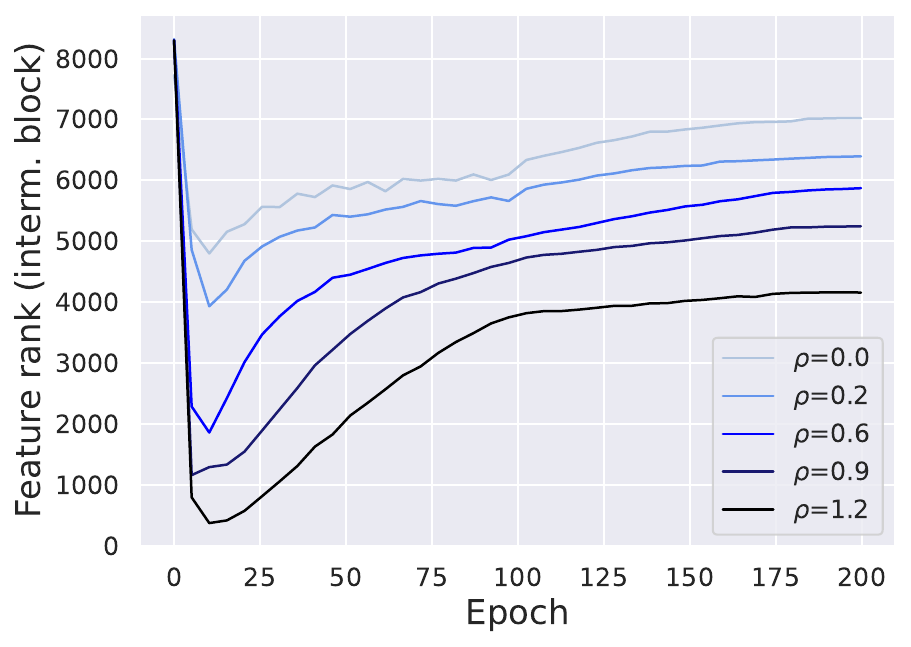}\\
        
        \multicolumn{3}{c}{\footnotesize \textbf{Tiny ImageNet, minimal setting}} \\
        \includegraphics[width=0.3\linewidth]{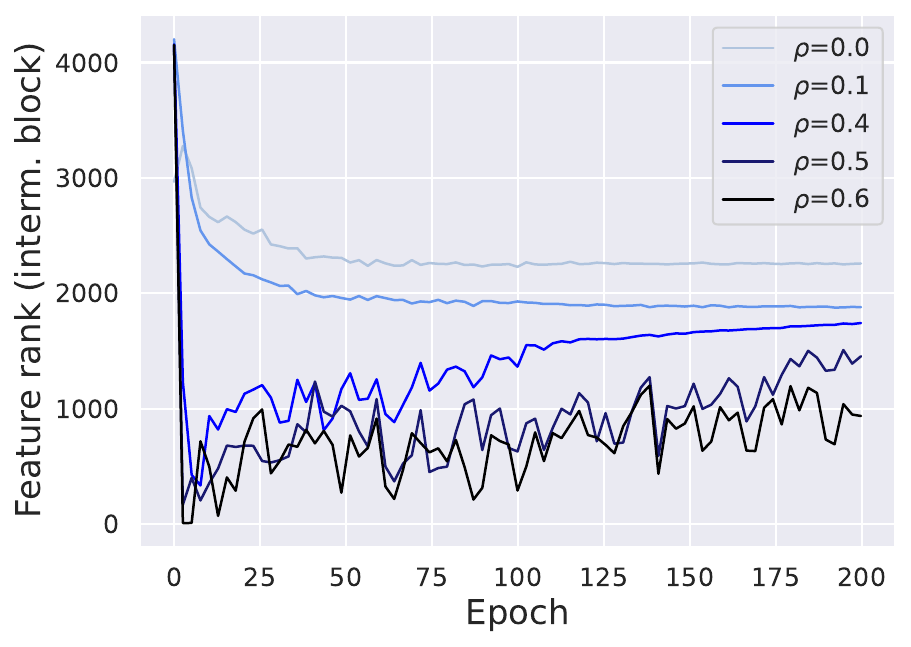} &
        \includegraphics[width=0.3\linewidth]{sam_low_rank-feature_rank-i_layer=0-dataset=tiny_imagenet_model=resnet18_exp_name=sam_low_rank_basic_lr=_percentile=0.99.pdf} &
        \includegraphics[width=0.3\linewidth]{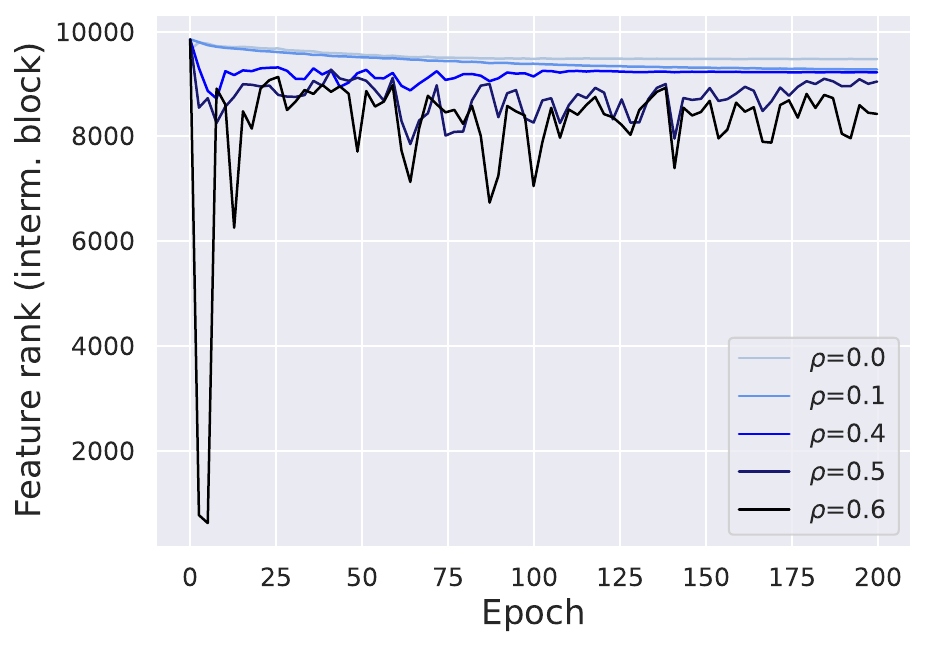}\\
        \multicolumn{3}{c}{\footnotesize \textbf{Tiny ImageNet, state-of-the-art setting}} \\
        \includegraphics[width=0.3\linewidth]{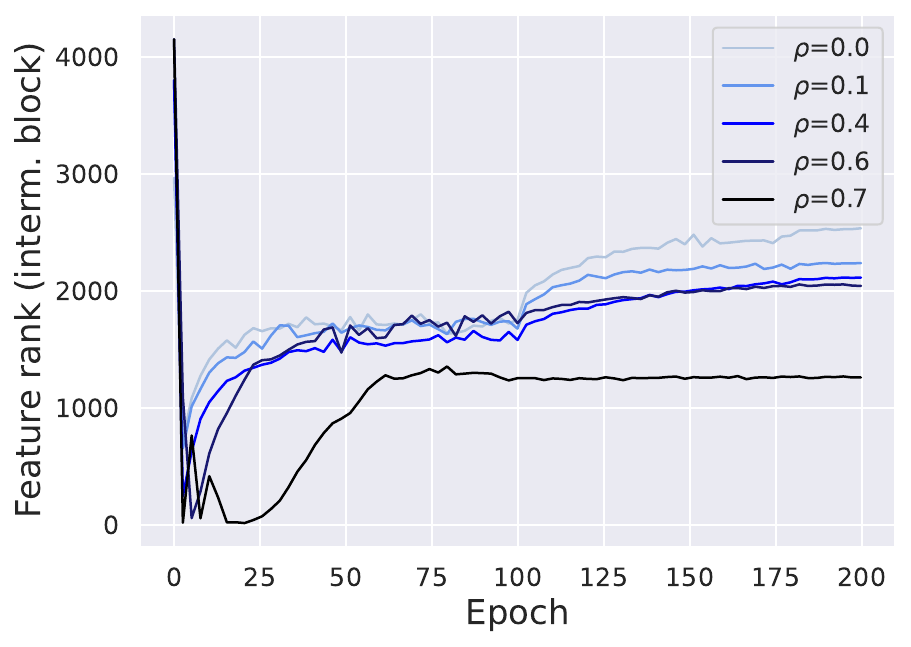} &
        \includegraphics[width=0.3\linewidth]{sam_low_rank-feature_rank-i_layer=0-dataset=tiny_imagenet_model=resnet18_exp_name=sam_low_rank_augm_sota_lr=_percentile=0.99.pdf} &
        \includegraphics[width=0.3\linewidth]{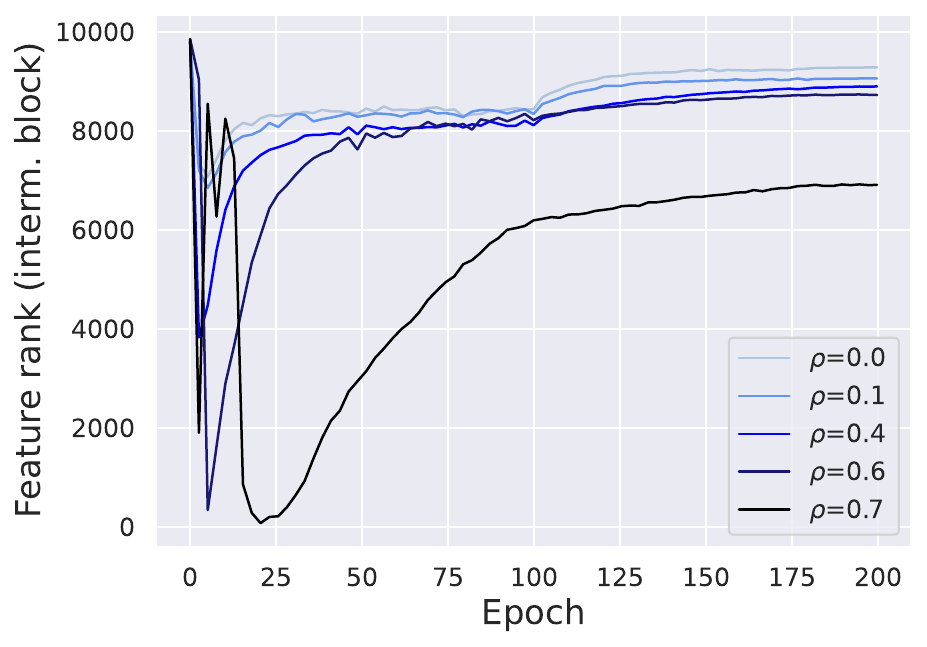}\\
    \end{tabular}
	\caption{\textbf{ResNet18 on CIFAR-10, CIFAR-100, Tiny ImageNet.} Feature ranks computed with different PCA variance thresholds exhibit very similar trends.}
	\label{fig:diff_variance_threshold}
\end{figure}

In Figure~\ref{fig:c10_low_rank_diff_batch_sizes}, we plot metrics like in Figure~\ref{fig:c10_low_rank_over_iterations} but for larger batch sizes ($512$ and $1024$ instead of $256$). We see that the low-rank trend is clearly visible there as well.
\begin{figure}[ht]
	\centering
    \footnotesize

    \textbf{\underline{Batch size 512}, minimal setting (small learning rate, no momentum, no weight decay)}\\
    \includegraphics[width=0.32\linewidth]{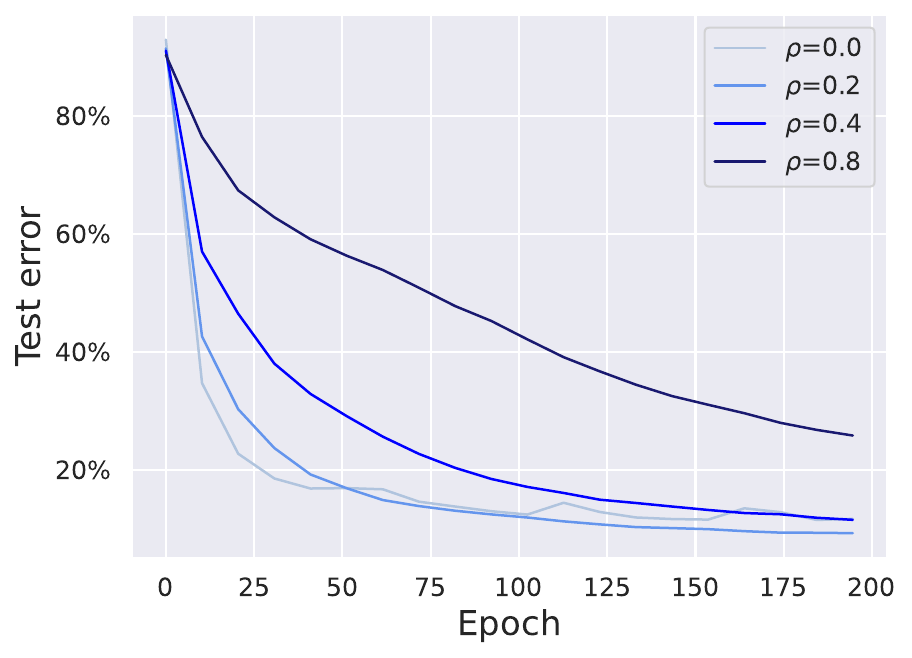}
    \includegraphics[width=0.32\linewidth]{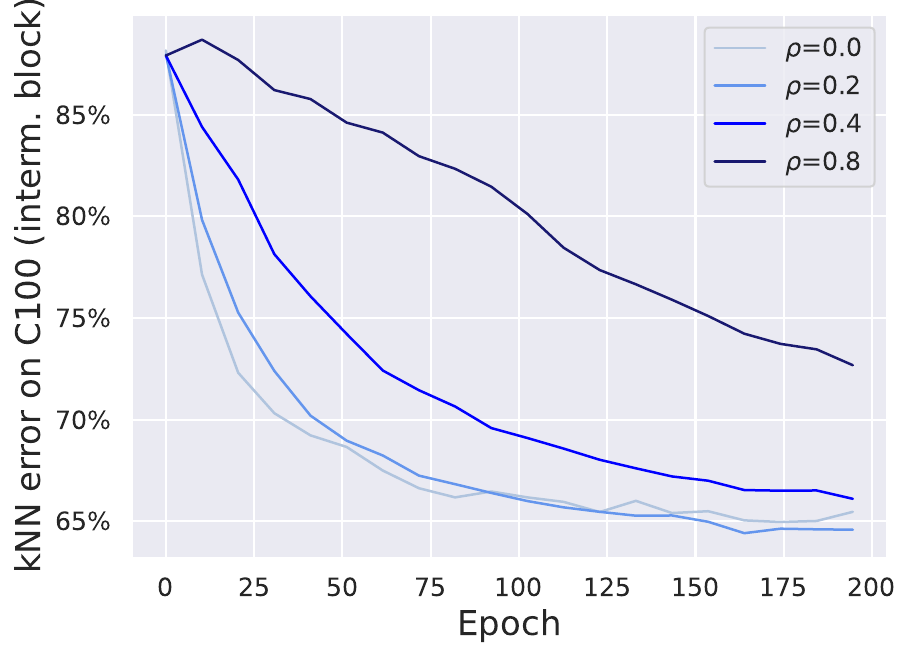}
    \includegraphics[width=0.32\linewidth]{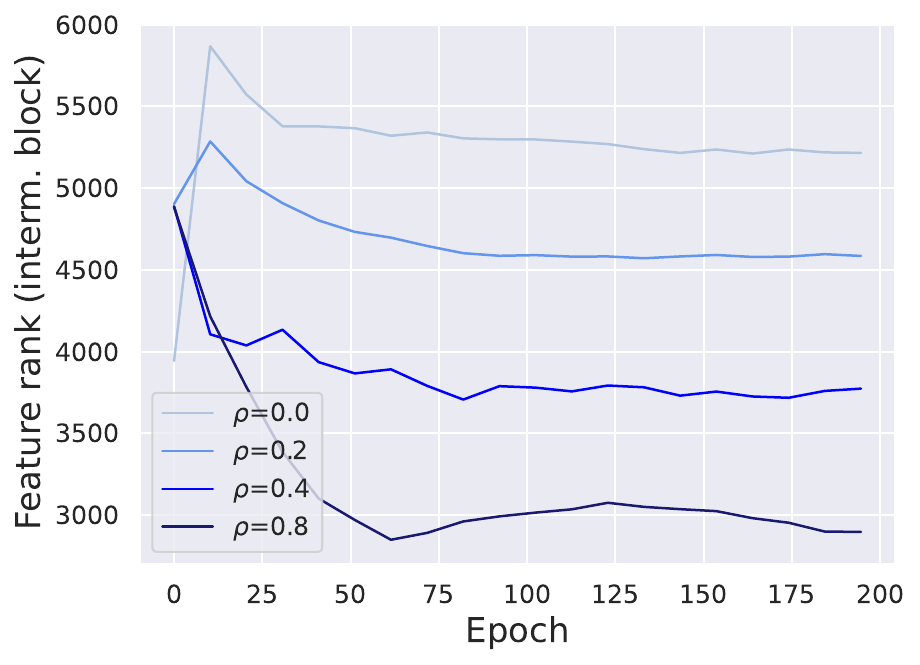}
    
    \textbf{\underline{Batch size 512}, state-of-the-art setting (large learning rate, momentum, weight decay)}\\
    \includegraphics[width=0.32\linewidth]{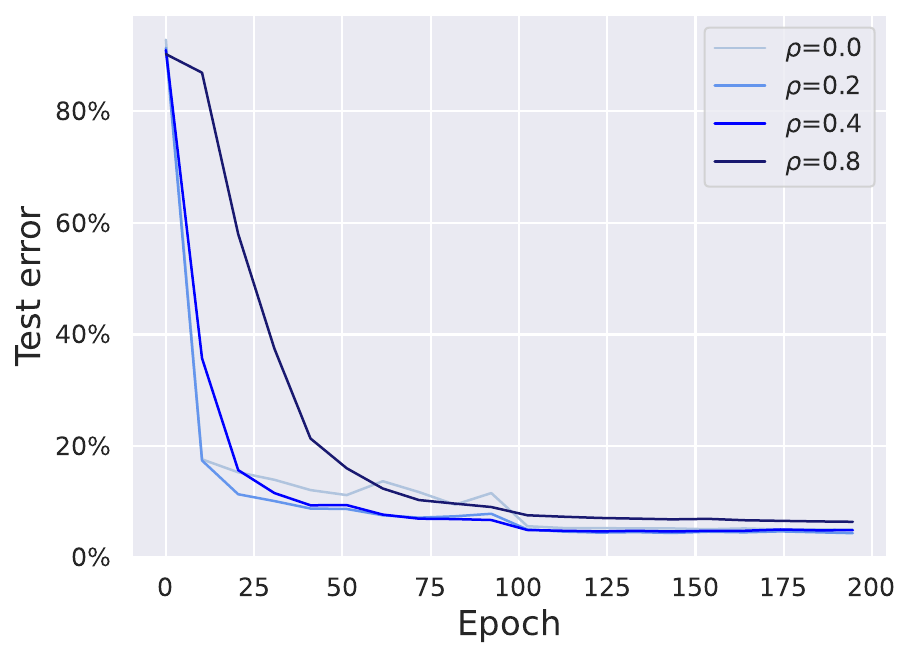}
    \includegraphics[width=0.32\linewidth]{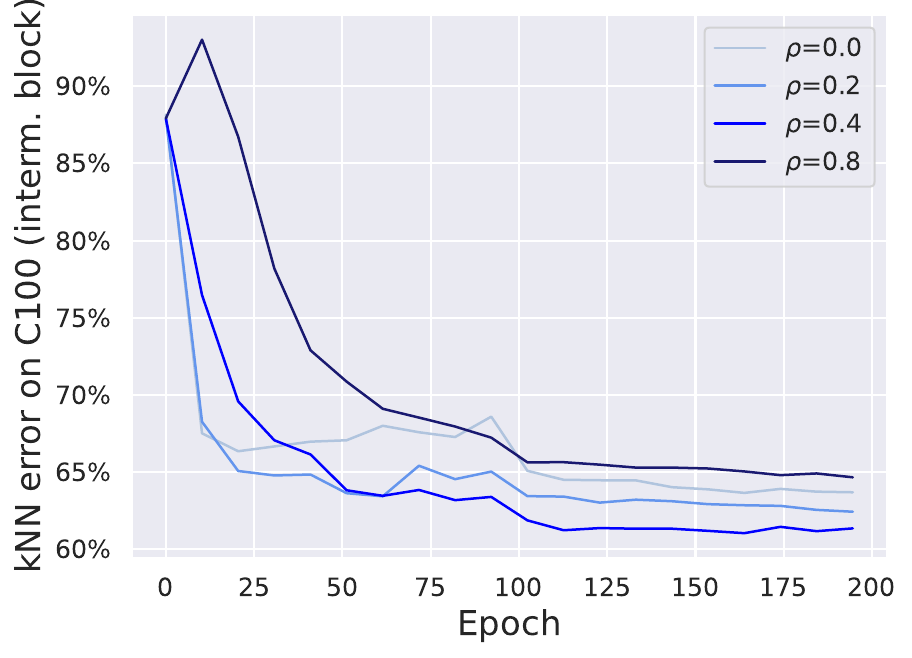}
    \includegraphics[width=0.32\linewidth]{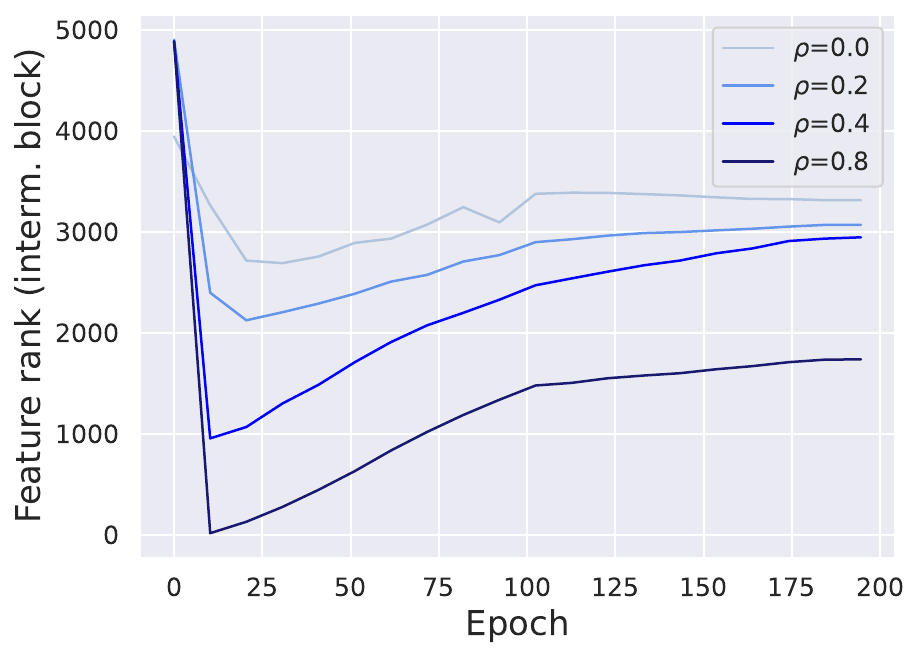}

    \textbf{\underline{Batch size 1024}, minimal setting (small learning rate, no momentum, no weight decay)}\\
    \includegraphics[width=0.32\linewidth]{sam_low_rank-test_err-dataset=cifar10_model=resnet18_exp_name=sam_large_batch_rebuttal_batch_size=512}
    \includegraphics[width=0.32\linewidth]{sam_low_rank-knn_err-i_layer=0-dataset=cifar10_model=resnet18_exp_name=sam_large_batch_rebuttal_batch_size=512}
    \includegraphics[width=0.32\linewidth]{sam_low_rank-feature_rank-i_layer=0-dataset=cifar10_model=resnet18_exp_name=sam_large_batch_rebuttal_batch_size=512_percentile=0.99}
    
    \textbf{\underline{Batch size 1024}, state-of-the-art setting (large learning rate, momentum, weight decay)}\\
    \includegraphics[width=0.32\linewidth]{sam_low_rank-test_err-dataset=cifar10_model=resnet18_exp_name=sam_large_batch_rebuttal_sota_batch_size=512}
    \includegraphics[width=0.32\linewidth]{sam_low_rank-knn_err-i_layer=0-dataset=cifar10_model=resnet18_exp_name=sam_large_batch_rebuttal_sota_batch_size=512}
    \includegraphics[width=0.32\linewidth]{sam_low_rank-feature_rank-i_layer=0-dataset=cifar10_model=resnet18_exp_name=sam_large_batch_rebuttal_sota_batch_size=512_percentile=0.99}

	\caption{\textbf{ResNet-18 on CIFAR-10 with different batch sizes.} Similarly to the experiments in Figure~\ref{fig:c10_low_rank_over_iterations} that were done with batch size $256$, SAM with larger batch sizes ($512$ and $1024$) also improves test error (\textit{left}), leads to more generalizable features (\textit{middle}), and noticeably reduces the feature rank at the intermediate ResNet block (\textit{right}).}
	\label{fig:c10_low_rank_diff_batch_sizes}
\end{figure}

Finally, in Figure~\ref{fig:feature_rank_last_layer}, we show feature ranks computed at the \textit{last} block of ResNet (just before the linear head). We see that the neural collapse phenomenon occurs: the feature rank converges to $10$ (i.e., the number of classes) even for $\rho=0$. The feature rank with SAM is lower at the beginning but then increases to a slightly higher value than $10$ towards the end of training.
\begin{figure}[ht]
    \centering
    \includegraphics[width=0.45\linewidth]{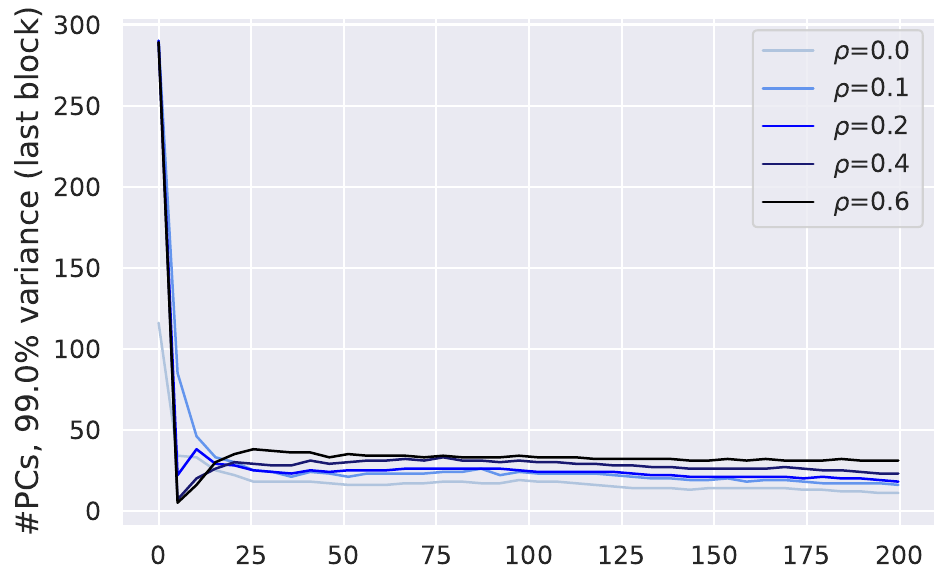}
    \caption{\textbf{ResNets-18 on CIFAR-10.} Feature ranks computed at the \textit{last} block of ResNet (just before the linear head). We see that the neural collapse phenomenon occurs: the feature rank converges to $10$ (i.e., the number of classes) even for $\rho=0$. The feature rank with SAM is lower at the beginning but then increases to a slightly higher value than $10$ towards the end.}
    \label{fig:feature_rank_last_layer}
\end{figure}

\clearpage

\section{Additional results for contrastive image-text training} \label{app:additional_exps_clip}
In Figure~\ref{fig:mscoco_frozen_text_encoder}, we plot the retrieval error and feature rank for models with a \textit{frozen} (i.e., not updated during training) BERT text encoder trained with different $\rho$ of SAM. 
We observe that the low-rank effect in the final text features (as well as image features) occurs even when a single layer is attached to a fixed BERT text encoder. 
Moreover, as a side note, SAM also improves the retrieval error in this setting as well.
\begin{figure}[ht]
	\centering
	\includegraphics[width=0.37\linewidth]{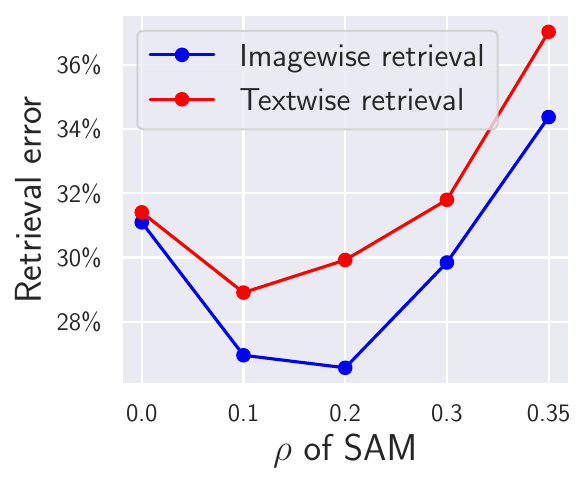}
    \quad \quad
    \includegraphics[width=0.37\linewidth]{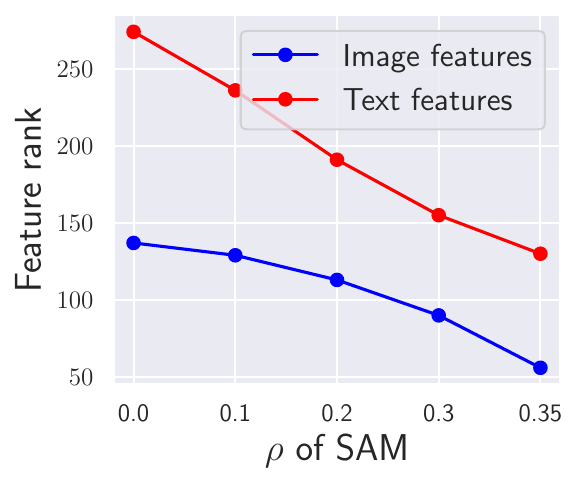}
	\caption{\textbf{Contrastive image-text training on MS-COCO (with a frozen text encoder).} The low-rank effect observed in the final text features even when a single layer is attached to a fixed BERT text encoder.}
	\label{fig:mscoco_frozen_text_encoder}
\end{figure}

\section{Additional results for two-layer networks} \label{app:additional_exps_two_layer}
In Figure~\ref{fig:fc_net_sam_other_act}, we plot the same metrics as in Figure~\ref{fig:fc_net_sam} but, in addition, for tanh and absolute value activations. 
We can see that these models, when trained with higher $\rho$ of SAM, have a smaller feature rank, smaller number of active tanh/abs units, and higher weight norm. 
This illustrates the fact that the same low-rank mechanism can be at play for many activations, not just for ReLU.
\begin{figure}[ht]
    \centering
    \textbf{Training statistics for SAM (ReLU activation)}\\
    \includegraphics[width=1.0\linewidth]{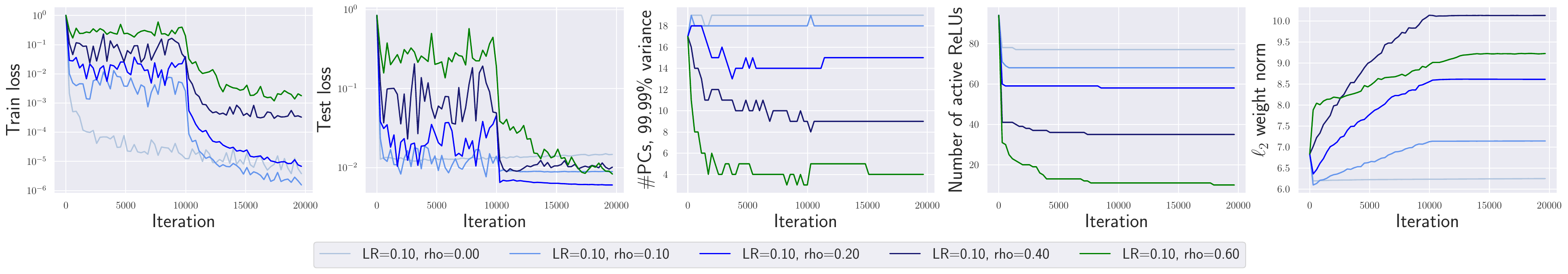}
    \textbf{Training statistics for SAM (tanh activation)}\\
    \includegraphics[width=1.0\linewidth]{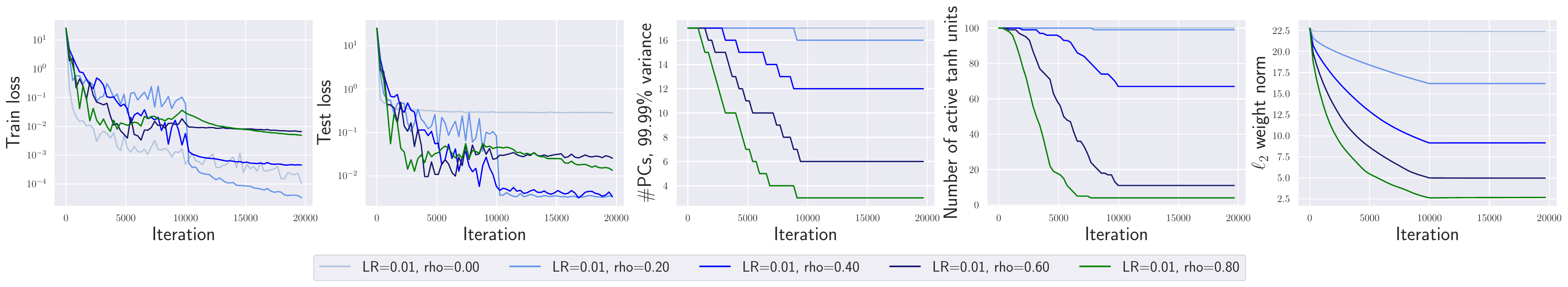}
    \textbf{Training statistics for SAM (absolute value activation)}\\
    \includegraphics[width=1.0\linewidth]{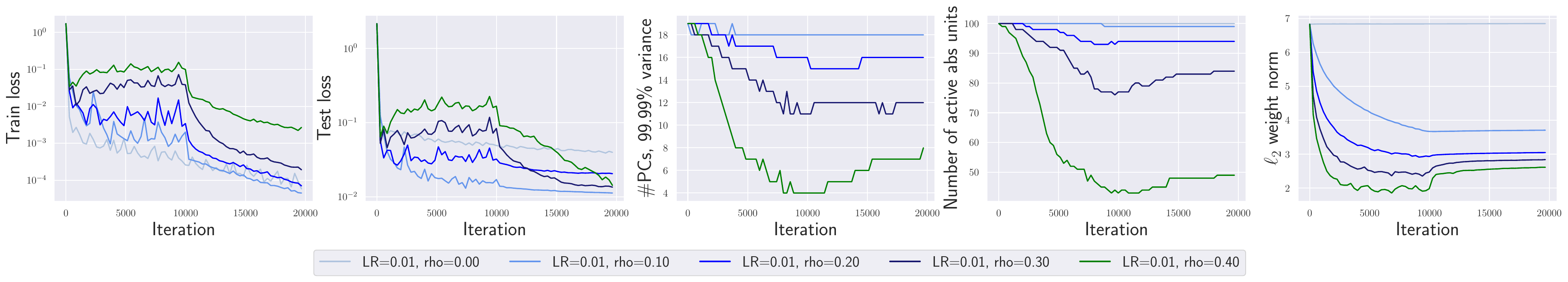}
    \caption{\textbf{Two-layer ReLU/tanh/abs networks in the teacher-student setup.} We can see that models trained with higher $\rho$ of SAM have a smaller feature rank, smaller number of active tanh/abs units, and higher weight norm.}
	\label{fig:fc_net_sam_other_act}
\end{figure}

In Figure~\ref{fig:fc_net_grad_reg}, we confirm empirically that using only the first-order terms in the SAM update---which is equivalent to the gradient norm regularization $\sum_{i=1}^n \|\nabla \ell_i(\thetav)\|_2$---leads to the same effect in all key metrics including the feature rank and the number of active ReLUs. This empirical validation supports the argument outlined in Proposition~\ref{prop:grad_reg_single_ex} which is based only on the first-order terms that empirically appear to be dominant for the low-rank effect of SAM.
\begin{figure}[!ht]
	\centering
    \textbf{Training statistics for gradient regularization $\sum_{i=1}^n \|\nabla \ell_i(\thetav)\|_2$ (ReLU activation)}\\
	\includegraphics[width=1.0\linewidth]{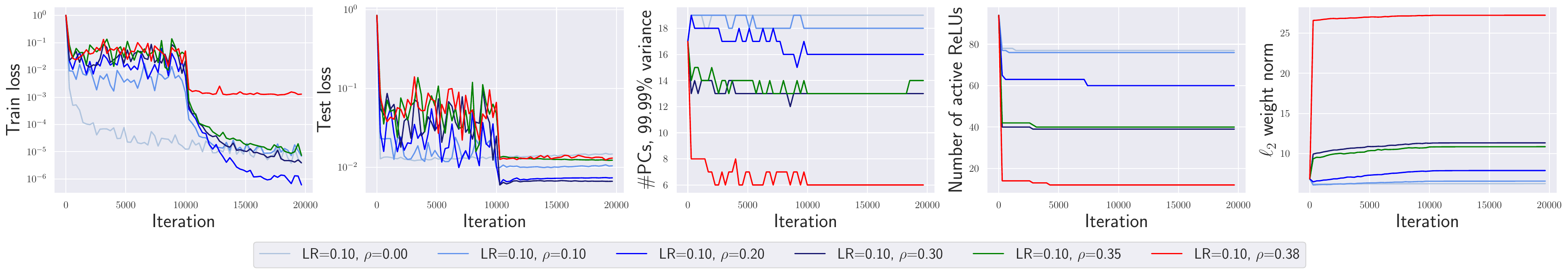}
    \caption{\textbf{Two-layer ReLU networks in the teacher-student setup.} We can see that models trained with higher $\rho$ of gradient regularization (which can be seen as the first-order approximation of SAM) have a smaller feature rank, smaller number of active ReLUs, and higher weight norm.}
	\label{fig:fc_net_grad_reg}
\end{figure}

\clearpage

\section{Additional results for investigations of the low-rank mechanism on deep networks} \label{app:additional_exps_investigations}
In Figure~\ref{fig:mscoco_detailed_investigation}, we show more detailed plots compared to Figure~\ref{fig:mscoco_image_feature_ranks_main} regarding the behavior of feature ranks at different layers of the vision transformer on MS-COCO. 
First, we note that the feature rank grows over the residual blocks starting from very small values in the first blocks due to the identical positional encoding applied to all inputs at the first layer. 
Second, we do not observe a clear trend in the reduction of rank for larger $\rho$ after applying attention or GeLU activations \textit{in the residual branches}. 
This is in contrast to the ranks in the \textit{residual stream} (i.e., the values right after adding the identity) after attention subblocks which, as highlighted in Figure~\ref{fig:mscoco_image_feature_ranks_main} in the main part, increase less for models with higher $\rho$. 
For the sake of completeness, the absolute values of ranks in the residual stream are shown in Figure~\ref{fig:mscoco_detailed_investigation}.

\begin{figure}[ht]
    \centering

    \textbf{Feature ranks before and after attention/GELU in the residual branches}\\
    \includegraphics[width=0.32\linewidth]{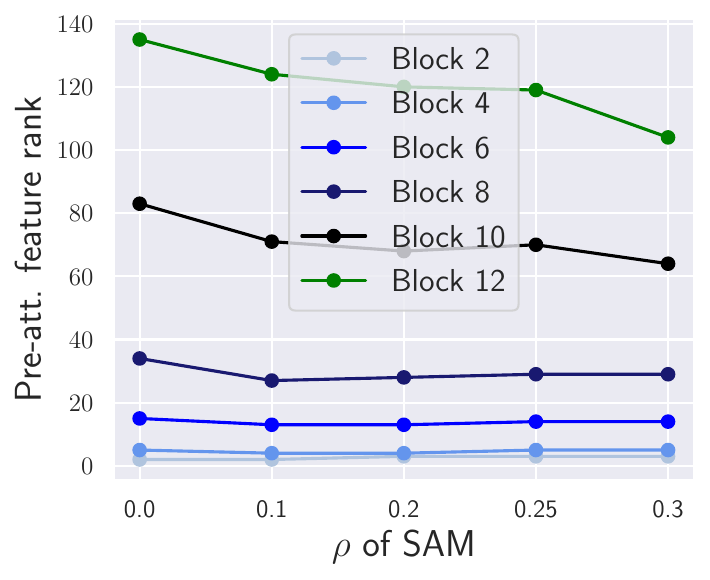}
    \includegraphics[width=0.32\linewidth]{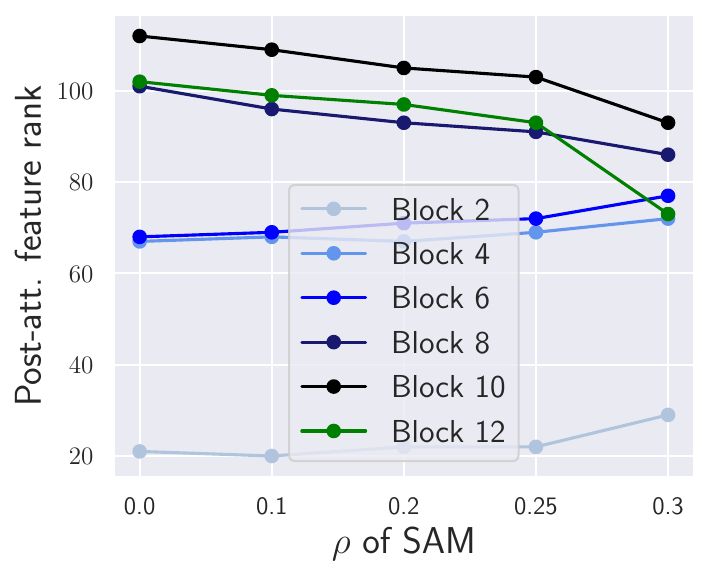}
    \includegraphics[width=0.32\linewidth]{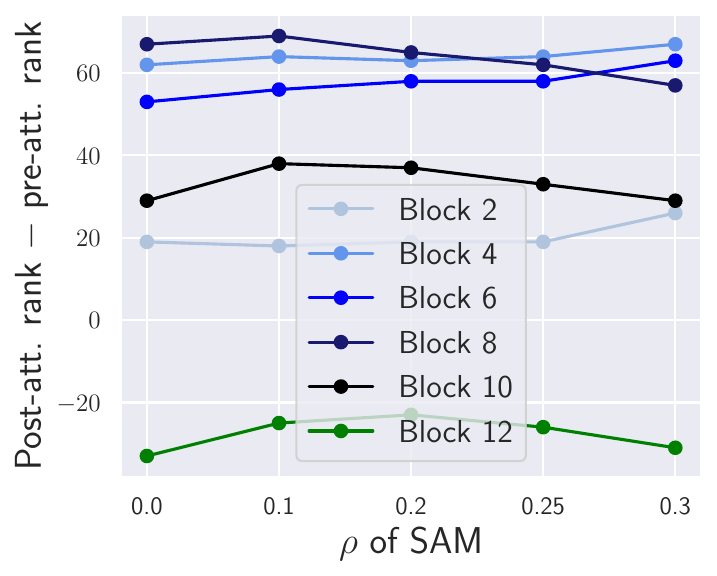}
    \\
    \includegraphics[width=0.32\linewidth]{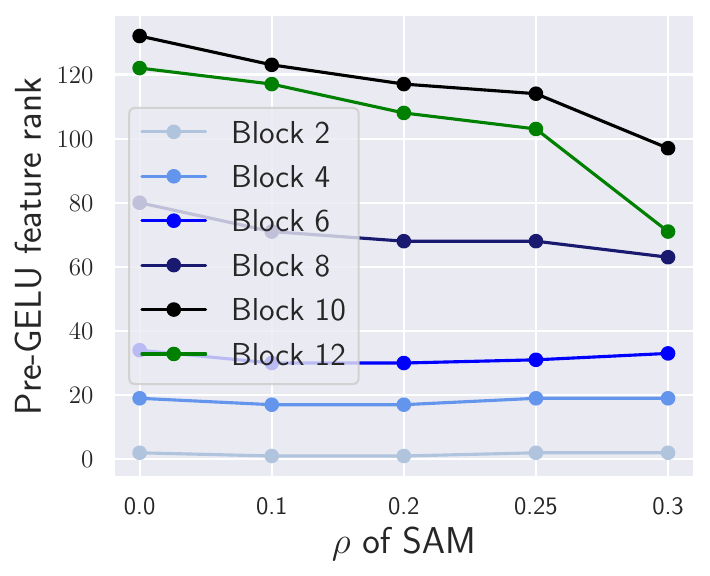}
    \includegraphics[width=0.32\linewidth]{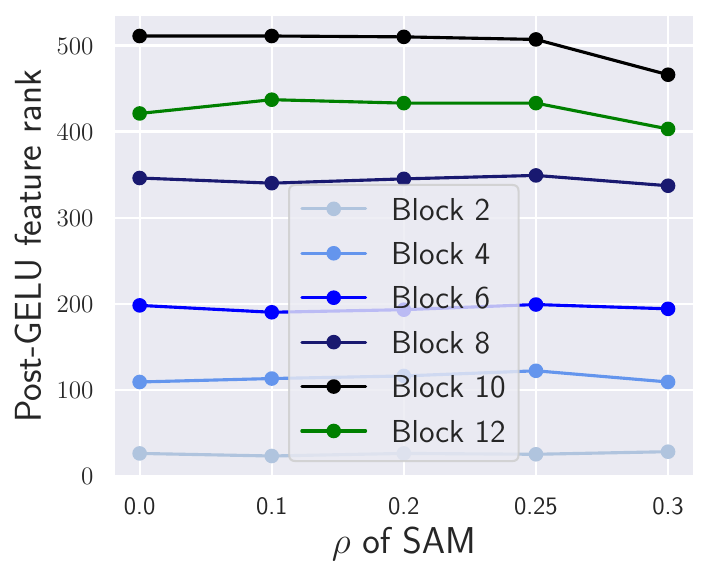}
    \includegraphics[width=0.32\linewidth]{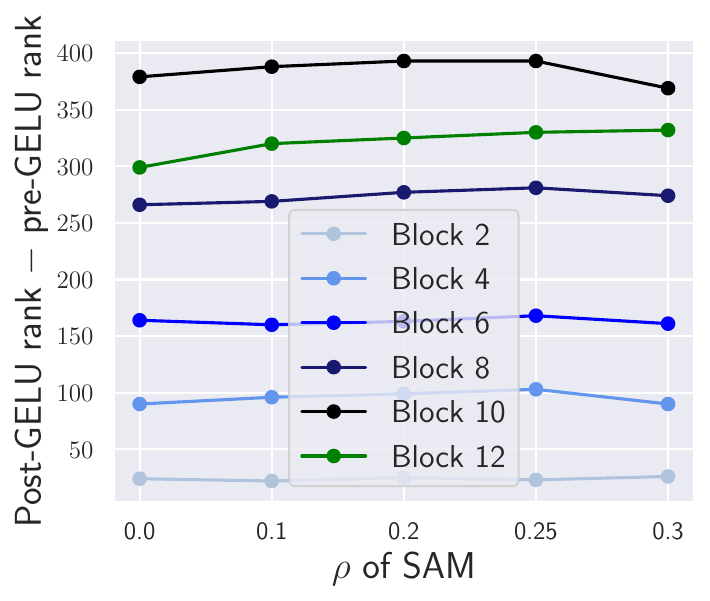}

    \textbf{Feature ranks for the residual stream}\\
    \includegraphics[width=0.32\linewidth]{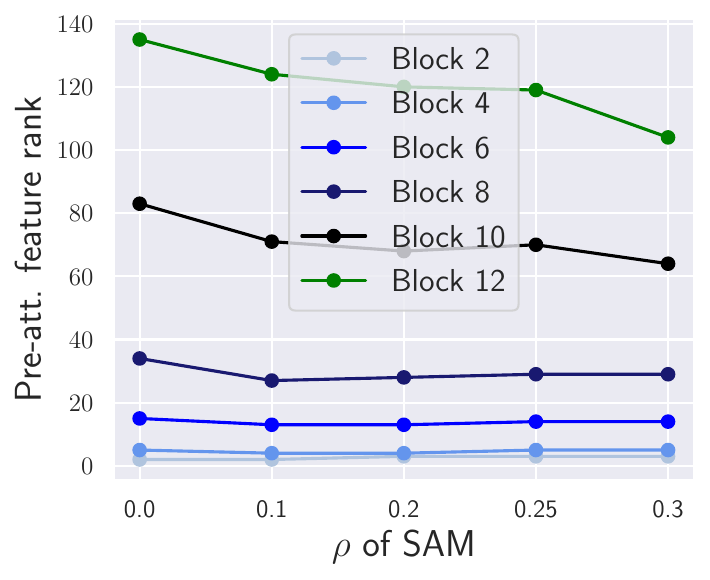}
    \includegraphics[width=0.32\linewidth]{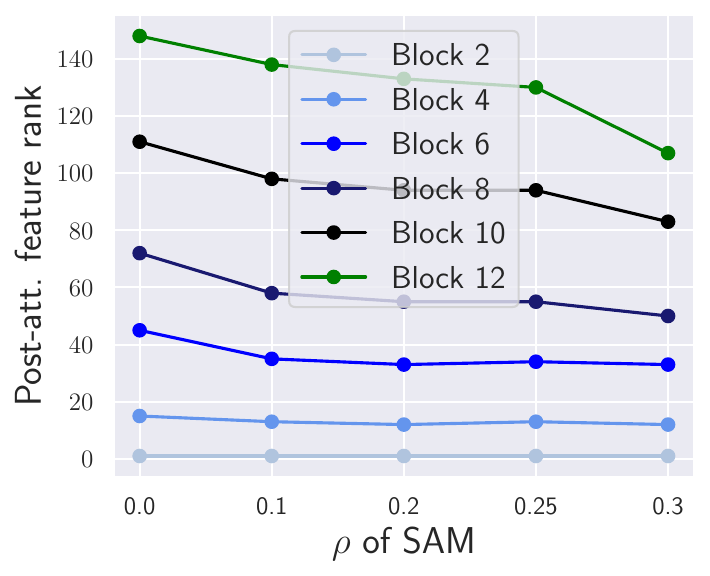}
    \includegraphics[width=0.32\linewidth]{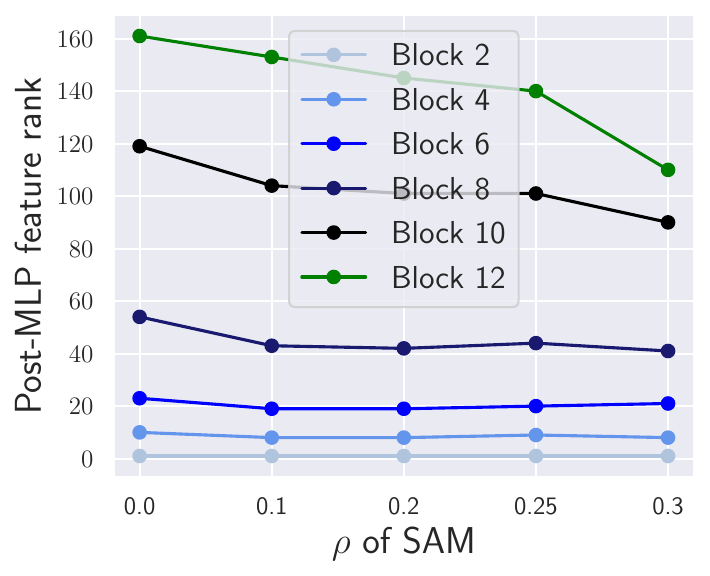}
    
    \caption{\textbf{Contrastive image-text training on MS-COCO (image encoder).} A more detailed look on the feature rank trend at different blocks depending on the $\rho$ of SAM.}
    \label{fig:mscoco_detailed_investigation}
\end{figure}

Finally, in Figure~\ref{fig:mscoco_text_head_feature_ranks}, we show the feature ranks for the last layer of the text encoder with \textit{unfrozen} and \textit{frozen} text encoders. 
Note that a single linear layer is used on top of the frozen text features to match the dimensions of the image and text encoders within the same feature space. 
We conclude that even a single linear layer trained on top of the text encoder can be sufficient for the low-rank effect of SAM. 
\begin{figure}[t]
	\centering
    \begin{tabular}{cc}
    \textbf{Unfrozen BERT text encoder} & \textbf{Frozen BERT text encoder}\\
	\includegraphics[width=0.4\linewidth]{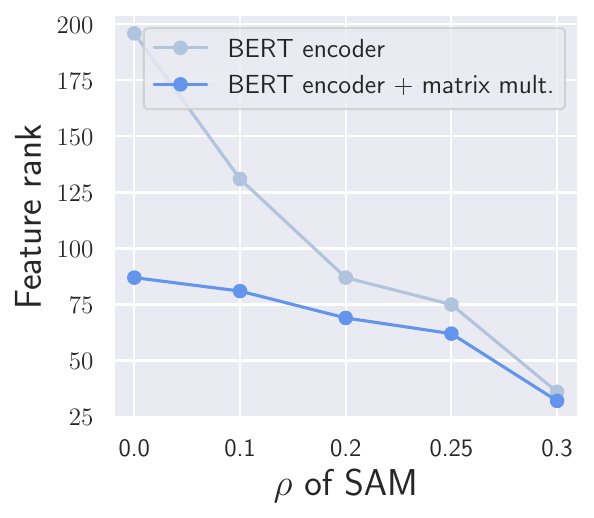} &
    \includegraphics[width=0.4\linewidth]{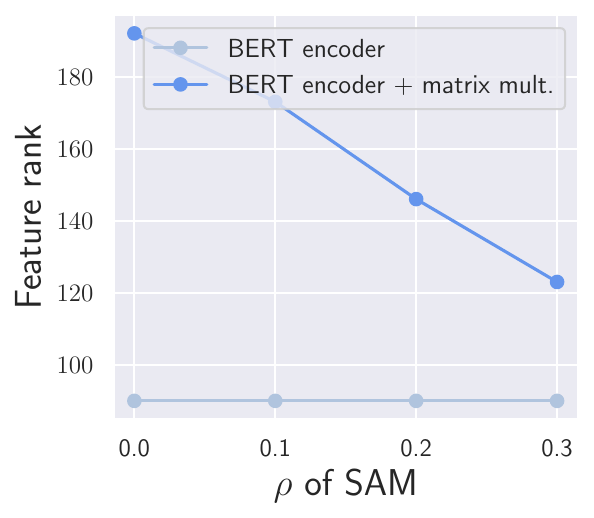}
     \end{tabular}
	\caption{\textbf{Contrastive image-text training on MS-COCO.} We show the feature ranks for the last layer of the text encoder with \textit{unfrozen} and \textit{frozen} text encoders. Matrix multiplication denotes the last linear layer used to match the dimensions of the image and text encoders within the same feature space.}
	\label{fig:mscoco_text_head_feature_ranks}
\end{figure}

\end{document}